\newtheorem{thm}{Theorem}
\newtheorem{lem}{Lemma}
\newcommand{\etal}{\textit{et al}. }
\newcommand{\ie}{\textit{i}.\textit{e}., }
\newcommand{\x}{\mathbf{x}}
\newcommand{\X}{\mathbf{X}}
\newcommand{\nphi}{n_{\mathbf{\phi}}}
\newcommand{\mii}{\mathbb{I}}
\newcommand{\miiphi}{\mathbb{I}_{\mathbf{\phi}}}
\newcommand{\miin}{\widehat{\mathbb{I}}_{n}}
\newcommand{\ejxy}{\mathbb{E}_{(\mathbf{X},Y)}}
\newcommand{\emxy}{\mathbb{E}_{\mathbf{X}}\mathbb{E}_{Y}}
\newcommand{\ejxyn}{\mathbb{E}_{(\mathbf{X},Y)_n}}
\newcommand{\ex}{\mathbb{E}_{\mathbf{X}}}
\newcommand{\ey}{\mathbb{E}_{Y}}
\newcommand{\exn}{\mathbb{E}_{\mathbf{X}_n}}
\newcommand{\eyn}{\mathbb{E}_{Y_n}}
\DeclareMathOperator{\PMI}{PMI}
\DeclareMathOperator{\Diff}{Diff}
\newcommand{\STAB}[1]{\begin{tabular}{@{}c@{}}#1\end{tabular}}
\begin{document}

\title{
Rethinking Softmax with Cross-Entropy: \\
Neural Network Classifier as \\ Mutual Information Estimator
}



\author[1,*]{Zhenyue Qin}
\author[1,2,*]{Dongwoo Kim}
\author[1]{Tom Gedeon}

\affil[1]{Australian National University, Australia}
\affil[2]{Pohang University of Science and Technology, Republic of Korea}
\affil[ ]{zhenyue.qin@anu.edu.au, dongwookim@postech.ac.kr, tom@cs.anu.edu.au}
\affil[*]{Equal contribution and correspondence}

\maketitle

\begin{abstract}
   Cross-entropy loss with softmax output is a standard choice to train neural network classifiers. While it is reasonable to reduce the cross-entropy between outputs of a neural network and labels, the implication of cross-entropy with softmax on the relation between inputs and labels remains to be better explained. We show that training a neural network with cross-entropy maximises the mutual information between inputs and labels through a variational form of mutual information. Our result provides an alternative view: neural network classifiers are mutual information estimators. The new view leads us to develop an informative class activation map (infoCAM). Given a classification task, infoCAM can locate the most informative features of the input toward a label. When applied to an image classification task, infoCAM performs better than the traditional class activation map in the weakly supervised object localisation task.
\end{abstract}

\section{Introduction}

Neural network classifiers play an important role in contemporary machine learning and computer vision~\cite{lecun2015deep}. Since the emergence of AlexNet~\cite{krizhevsky2012imagenet}, much research has been done to improve the performance of neural network classifiers. To overcome the vanishing gradient in deep networks, the residual connection and various activation functions have been proposed~\cite{he2016deep,nair2010rectified,maas2013rectifier}. To improve generalisation, better regularisation techniques such as dropout have been developed~\cite{srivastava2014dropout}. To reach better local minima, various optimisation techniques have been suggested~\cite{duchi2011adaptive,kingma2014adam}.
Although many architectural choices and optimisation methods have been explored, relatively fewer considerations have been shown on the final layer of the neural network classifier: the cross-entropy loss with the softmax output.

The combination of softmax with cross-entropy is a standard choice to train neural network classifiers. It measures the cross-entropy between the ground truth label $y$ and the output of the neural network $\hat{y}$. The network's parameters are then adjusted to reduce the cross-entropy via back-propagation. While it seems sensible to reduce the cross-entropy between the labels and predicted probabilities, it still remains a question as to what relation the network aims to model between input $x$ and label $y$ via this loss function, \ie, softmax with cross-entropy. 

In this work, for neural network classifiers, we explorer the connection between \emph{cross-entropy with softmax} and \emph{mutual information between inputs and labels}. From a variational form of mutual information, we prove that optimising model parameters using the softmax with cross-entropy is equal to maximising the mutual information between input data and labels when the distribution over labels is uniform. This connection provides an alternative view on neural network classifiers: they are mutual information estimators. We further propose a probability-corrected version of softmax that relaxes the uniform distribution condition. 

This new information-theoretic view of neural network classifiers being mutual information estimators allows us to directly access the most informative regions of input with respect to the labels, given classification tasks. The access to the most informative regions for the labels leads us to develop infoCAM that can locate the most relevant regions for the labels within an image, given an object classification task. Compared to the traditional class activation map, infoCAM exhibits better performance in the weakly supervised object localisation task.

In summary, we outline our contributions as follows: 
\begin{itemize}
    \item The previous view on cross-entropy with softmax only reflects the relationship between the outputs and the labels. We show that with minor modifications to softmax, neural network classifiers then become mutual information estimators. As a result, these mutual information estimators exhibit the information-theoretic relationship between the inputs and the labels. 
    \item We empirically demonstrate that our mutual information estimators can \emph{accurately} evaluate mutual information. We also show mutual information estimators can perform classification more accurately than traditional neural network classifiers. When the dataset is imbalanced, the estimators outperform the state-of-the-art classifier for our example. 
    \item We propose the informative class activation map (infoCAM) which locates the most informative regions for the labels within an image via mutual information. For the weakly supervised object localisation task, we achieve a new state-of-the-art result on Tiny-ImageNet with infoCAM.
\end{itemize}

\section{Preliminaries}
\label{sec:pre}
In this section, we first define the notations used throughout this paper. We then introduce the definition of mutual information and variational forms of mutual information. 

\subsection{Notation}
We let training data consist of $M$ classes and $N$ labelled instances as $\{ (\mathbf{x}_{i}, y_{i}) \}_{i=1}^{N}$, where $y_i \in \mathcal{Y} = \{ 1, ... , M \}$ 
is a class label of input $\mathbf{x}_i$. We let $n_{\mathbf{\phi}}(\mathbf{x}): \mathcal{X} \rightarrow \mathbb{R}^M$ be a neural network parameterised by $\phi$, where $\mathcal{X}$ is a space of input $\mathbf{x}$. Without additional clarification, we assume $\mathcal{X}$ to be a compact subset of $D$-dimensional Euclidean space. 
We denote by $P_{XY}$ some joint distribution over $\mathcal{X} \times \mathcal{Y}$, with $(\mathbf{X}, Y) \sim P_{XY}$ being a pair of random variables. $P_{X}$ and $P_{Y}$ are the marginal distributions of $\mathbf{X}$ and $Y$, respectively. We remove a subscript from the distribution if it is clear from context.

\subsection{Variational Bounds of Mutual Information}
Mutual information evaluates the mutual dependence between two random variables. 
The mutual information between $\mathbf{X}$ and $Y$ can be expressed as:
\begin{equation}
\label{eq:orig_mi_def}
\mathbb{I}(\mathbf{X}, Y) = \int_{\mathbf{x} \in \mathcal{X}} \bigg[ \sum_{y \in \mathcal{Y}} P(\mathbf{x}, y) \log \big( \frac{P(\mathbf{x}, y)}{P(\mathbf{x}) P(y)} \big) \bigg] d\mathbf{x}. 
\end{equation}

Equivalently, following~\cite{poole2019variational}, we may express the definition of mutual information in \autoref{eq:orig_mi_def} as:
\begin{equation}
\label{eq:cond_mi_def}
\mathbb{I}(\mathbf{X}, Y) = \mathbb{E}_{(\mathbf{X}, Y)} \bigg[ \log \frac{P(y | \mathbf{x})}{P(y)} \bigg],
\end{equation}
where $\mathbb{E}_{(\X, Y)}$ is the abbreviations of $\mathbb{E}_{(\X, Y) \sim P_{XY}}$. 
Computing mutual information directly from the definition is, in general, intractable due to integration. 

\textbf{Variational form}: Barber and Agakov introduce a commonly used lower bound of mutual information via a variational distribution $Q$~\cite{barber2003algorithm}, derived as:
\begin{align}
\label{eq:ba_def}
\mathbb{I}(\mathbf{X}, Y) &= \mathbb{E}_{(\X, Y)} \bigg[ \log \frac{P(y | \mathbf{x})}{P(y)} \bigg] \notag \\
&= \mathbb{E}_{(\X, Y)} \bigg[ \log \frac{Q(y | \mathbf{x})}{P(y)} \frac{P(y | \mathbf{x})}{Q(y | \mathbf{x})} \bigg] \notag \\
&= \mathbb{E}_{(\X, Y)} \bigg[ \log \frac{Q(y | \mathbf{x})}{P(y)}
\bigg] + 
\underbrace{\mathbb{E}_{(\X, Y)} \bigg[ \log \frac{P(\mathbf{x},y)}{Q(\mathbf{x},y)} \bigg]}_{D_{KL}(P(\mathbf{x}, y) || Q(\mathbf{x}, y))} \notag - 
\underbrace{\mathbb{E}_{(\X)} \bigg[ \log \frac{P(\mathbf{x})}{Q(\mathbf{x})} \bigg]}_{D_{KL}(P(\mathbf{x}) || Q(\mathbf{x}))} \notag \\
&\ge \mathbb{E}_{(\X, Y)} \bigg[ \log \frac{Q(\mathbf{x},y)}{P(\x)P(y)} \bigg]. 
\end{align}
The inequality in \autoref{eq:ba_def} holds since KL divergence maintains non-negativity. This lower bound is tight when variational distribution $Q(\mathbf{x},y)$ converges to joint distribution $P(\mathbf{x},y)$, i.e., $Q(\mathbf{x},y) = P(\mathbf{x},y)$.

The form in \autoref{eq:ba_def} is, however, still hard to compute since it is not easy to make a tractable and flexible variational distribution $Q(\mathbf{x},y)$. Variational distribution $Q(\mathbf{x},y)$ can be considered as a constrained function which has to satisfy the probability axioms. Especially, the constraint is challenging to model with a function estimator such as a neural network. To relax the function constraint, McAllester \etal \cite{mcallester2018formal} further apply reparameterisation and define $Q(\mathbf{x},y)$ in terms of an unconstrained function $f_{\phi}$ parameterised by $\phi$ as:
\begin{equation}
\label{eq:repara_q}
Q(\mathbf{x},y) = \frac{P(\x)P(y)}{E_{y' \sim P_Y}[ \exp(f_{\mathbf{\phi}}(\mathbf{x}, y')) ]} \exp(f_{\mathbf{\phi}}(\mathbf{x}, y)).
\end{equation}
As a consequence, the variational lower bound of mutual information $\mathbb{I}(\mathbf{X}, Y)$ can be rewritten with function $f_{\mathbf{\phi}}$ as:
\begin{align}
\label{eq:mi_f_low_bound}
\mathbb{I}(\X, Y) \ge \mathbb{E}_{(\X, Y)} \bigg[ \log \frac{\exp(f_{\mathbf{\phi}}(\mathbf{x}, y))}{ E_{y'}[ \exp(f_{\mathbf{\phi}}(\mathbf{x}, y')) ]}  \bigg].
\end{align}
Thus, one can estimate mutual information without any constraint on $f$.
Through the reparameterisation, the MI estimation can be recast as an optimisation problem.

\section{NN Classifiers as MI Estimators}
\label{sec:nn_as_mi_evaluators}
In this section, we prove that a neural network classifier with cross entropy loss and softmax output estimates the mutual information between inputs and labels.

To view neural network classifiers as mutual information estimators, we need to discuss two separate cases related to the dataset: whether it is balanced or imbalanced. 

\subsection{Softmax with Balanced Dataset}
Softmax is widely used to map outputs of neural networks into a categorical probabilistic distribution for classification. Given neural network $n(\x):\mathcal{X} \rightarrow \mathbb{R}^{M}$, softmax $\sigma:\mathbb{R}^{M} \rightarrow \mathbb{R}^{M}$ is defined as:
\begin{align}
\sigma(n(\x))_y = \frac{\exp( n(\x)_y )}{\sum_{y'=1}^{M}\exp( n(\x)_{y'})}.
\end{align}
Expected cross-entropy is often employed to train a neural network with softmax output. The expected cross-entropy loss is
\begin{align}
\label{eq:cross_entropy}
L = - \mathbb{E}_{(\X,Y)}[ n(\x)_y - \log({\sum_{y'=1}^{M}\exp( n(\x)_{y'})}) ],
\end{align}
where the expectation is taken over the joint distribution of $X$ and $Y$. Given a training set, one can train the model with an empirical distribution of the joint distribution. We present an interesting connection between cross-entropy with softmax and mutual information in the following theorem. In a bid for conciseness, we only provide proof sketches for \autoref{thm:equality} and \autoref{thm:softmax_im} here. Please refer to the appendix for rigorous proofs. 
\begin{thm}
 \label{thm:equality}
 Let $f_\phi(\x,y)$ be $n(\x)_y$. Infimum of the expected cross-entropy loss with softmax outputs is equivalent to the mutual information between input and output variables up to constant $\log M$ under uniform label distribution. 
\end{thm}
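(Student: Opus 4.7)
The plan is to connect the expected cross-entropy loss in \autoref{eq:cross_entropy} directly to the variational lower bound in \autoref{eq:mi_f_low_bound} by exploiting the two hypotheses of the theorem: the identification $f_{\phi}(\x,y) = n(\x)_y$, and the uniform label distribution $P(Y=y) = 1/M$.

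First I would rewrite the cross-entropy in log-ratio form,
$L = -\ejxy\!\left[\log \tfrac{\exp(n(\x)_y)}{\sum_{y'=1}^{M} \exp(n(\x)_{y'})}\right]$,
and observe that under a uniform $P_Y$ the denominator satisfies
$\sum_{y'=1}^{M} \exp(n(\x)_{y'}) = M \cdot \ey[\exp(n(\x)_{Y'})]$
with $Y' \sim P_Y$. Splitting the logarithm yields
$-L = -\log M + \ejxy\!\left[\log \tfrac{\exp(n(\x)_y)}{\ey[\exp(n(\x)_{Y'})]}\right]$,
and the right-most expectation is exactly the variational lower bound of \autoref{eq:mi_f_low_bound} with $f_{\phi}(\x,y) = n(\x)_y$. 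Combining gives the one-sided inequality $L \ge \log M - \mathbb{I}(\X, Y)$ for every choice of $\phi$, so taking the infimum over $\phi$ yields $\inf_\phi L \ge \log M - \mathbb{I}(\X, Y)$.

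To upgrade this to an equality, I would invoke the tightness condition for the Barber--Agakov bound derived in \autoref{eq:ba_def}: equality holds precisely when the reparameterised variational distribution in \autoref{eq:repara_q} coincides with the true joint, i.e.\ when $\exp(n(\x)_y)$ is proportional (in $y$) to $P(y\mid \x)$. This is achieved by $n(\x)_y = \log P(y\mid \x) + C(\x)$ for any input-dependent shift $C(\x)$, which is invariant under softmax. Under the standard assumption that the network family $\{n_\phi\}$ is rich enough to approximate such a function arbitrarily well, the reverse inequality $\inf_\phi L \le \log M - \mathbb{I}(\X, Y)$ follows, and the two bounds combine to give $\inf_\phi L = \log M - \mathbb{I}(\X, Y)$, which is the claimed equivalence up to the constant $\log M$.

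The main obstacle I expect is this last realisability step. The bound $L \ge \log M - \mathbb{I}(\X, Y)$ is pure algebra once the uniform-label reformulation is in place, but equality relies on being able to realise (or approach) the optimal logits $\log P(y\mid \x)$ inside the parameterisation $n_\phi$. A clean argument would either state the infimum over all measurable $f: \mathcal{X}\times\mathcal{Y}\to\mathbb{R}$ and invoke density of neural networks in that class via a universal approximation theorem, or assume mild regularity of $P(y\mid \x)$ on the compact domain $\mathcal{X}$ so that such approximation is justified. A secondary technical point is ensuring the interchange of the sum in $y'$ with the expectation $\ey$ is always well-defined, which is immediate here because $\mathcal{Y}$ is finite and the softmax denominator is strictly positive.
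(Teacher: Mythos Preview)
Your proposal is correct and follows essentially the same route as the paper: both arguments use the uniform label assumption to rewrite $\sum_{y'}\exp(n(\x)_{y'})$ as $M\,\ey[\exp(n(\x)_{Y'})]$, thereby identifying $-L+\log M$ with the variational lower bound of \autoref{eq:mi_f_low_bound}, and then appeal to tightness of that bound to pass from an inequality to an equality at the infimum. Your treatment of the realisability step (choosing $n(\x)_y=\log P(y\mid\x)+C(\x)$ and invoking universal approximation) is in fact more explicit than the main-text sketch, which simply asserts that the supremum of the right-hand side of \autoref{eq:mi_f_low_bound} equals $\mathbb{I}(\X,Y)$, but it matches the argument the paper supplies in its appendix via the universal approximation theorem.
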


\begin{proof}
Let $f_\phi(\x,y) = n(\x)_y$, then the lower bound is
\begin{align}
 \mathbb{E}_{(\X, Y)} \bigg[ \log \frac{\exp(n(\x)_y)}{ E_{y'}[ \exp(n(\x)_{y'}) ]}  \bigg].
\end{align}
If the distribution of the label is uniform then, it can be rewritten as
\begin{align}
 &\mathbb{E}_{(\X, Y)} \bigg[ \log \frac{\exp(n(\x)_y)}{ 1/M \sum_{y'=1}^{M} \exp(n(\x)_{y'}) }  \bigg] \notag \\
 &= \mathbb{E}_{(\X, Y)} \bigg[ \log \frac{\exp(n(\x)_y)}{ \sum_{y'=1}^{M} \exp(n(\x)_{y'}) } \bigg] + \log M, \label{eq:softmax_mi}
\end{align}
which is equivalent to the negative expected cross-entropy loss (\ref{eq:cross_entropy}) up to constant $\log M$. Hence, the infimum of the expected cross entropy is equal to the mutual information between input and output variables since the supremum of r.h.s in \autoref{eq:mi_f_low_bound} is the mutual information. 
\end{proof}
Note that the constant does not change the gradient of the objective. Consequently, the solutions of both the mutual information maximisation and the softmax cross-entropy minimisation optimisation problems are the same. 

\subsection{Softmax with Imbalanced Dataset}
The uniform label distribution assumption in \autoref{thm:equality} is restrictive since we cannot access the true label distribution, often assumed to be non-uniform. To relax the restriction, we propose a probability-corrected softmax (PC-softmax):
\begin{align}
\label{eq:prob_cor_softmax}
\sigma_p(n(\x))_y = \frac{\exp( n(\x)_y )}{\sum_{y'=1}^{M}P(y')\exp( n(\x)_{y'})},
\end{align}
where $P(y')$ is a distribution over label $y'$. 
In experiments, we optimise the revised softmax with empirical distribution on ${P}(y')$ estimated from the training set. We show the equivalence between optimising the classifier and maximising mutual information with the new softmax below.

\begin{thm}
\label{thm:softmax_im}
The mutual information between two random variables $X$ and $Y$ can be obtained via the infimum of cross-entropy with PC-softmax in \autoref{eq:prob_cor_softmax}, using a neural network. Such an evaluation is strongly consistent. 
\end{thm}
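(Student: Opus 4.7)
The plan is to split the proof into two parts mirroring the two claims: (i) the infimum of the expected PC-softmax cross-entropy equals $\mathbb{I}(\mathbf{X},Y)$ when $n_\phi$ ranges over a sufficiently rich neural network family, and (ii) the empirical estimator obtained by substituting sample averages converges to $\mathbb{I}(\mathbf{X},Y)$ almost surely.

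For (i), the key observation is that PC-softmax makes the unconstrained variational bound in \autoref{eq:mi_f_low_bound} coincide with the negative cross-entropy loss. Substituting $f_\phi(\mathbf{x},y) = n_\phi(\mathbf{x})_y$ and using $\mathbb{E}_{y' \sim P_Y}[\exp(n_\phi(\mathbf{x})_{y'})] = \sum_{y'=1}^{M} P(y')\exp(n_\phi(\mathbf{x})_{y'})$, the right-hand side of \autoref{eq:mi_f_low_bound} becomes the negative expected PC-softmax cross-entropy. Tightness of the Barber--Agakov bound requires $Q(\mathbf{x},y) = P(\mathbf{x},y)$, which via the reparameterisation in \autoref{eq:repara_q} is equivalent to $n_\phi(\mathbf{x})_y = \log P(y\mid\mathbf{x}) + c(\mathbf{x})$ for some $\mathbf{x}$-dependent constant (absorbed by PC-softmax). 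Invoking universal approximation for neural networks on the compact input domain $\mathcal{X}$ to approximate $\log P(y\mid\mathbf{x})$ uniformly then yields $\inf_\phi L_p(\phi) = -\mathbb{I}(\mathbf{X},Y)$.

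For (ii), I would adapt the MINE consistency argument of Belghazi \etal Let $\hat{L}_p^{(N)}(\phi)$ be the empirical PC-softmax cross-entropy computed on $N$ i.i.d.\ samples with the empirical marginal $\hat{P}_N(y') = \frac{1}{N}\sum_i \mathbf{1}\{y_i = y'\}$ replacing $P(y')$, and let $\hat{\mathbb{I}}_N = -\inf_\phi \hat{L}_p^{(N)}(\phi)$. A standard error decomposition gives
\begin{align*}
|\hat{\mathbb{I}}_N - \mathbb{I}(\mathbf{X},Y)| \le \underbrace{\bigl|\mathbb{I}(\mathbf{X},Y) + \inf_\phi L_p(\phi)\bigr|}_{\text{approximation}} + \underbrace{\sup_\phi \bigl|\hat{L}_p^{(N)}(\phi) - L_p(\phi)\bigr|}_{\text{estimation}},
\end{align*}
where the approximation term is controlled by universal approximation over a growing compact parameter set $\Phi$, and the estimation term requires a uniform law of large numbers over $\Phi$.

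The hardest step is the estimation term because $\hat{P}_N$ sits inside a log-sum-exp that is then averaged against the empirical joint, so two sources of sampling are intertwined. The cleanest route is to bound the log-sum-exp perturbation by a Lipschitz estimate of the form $\|\hat{P}_N - P\|_1 \cdot \exp(2B)/\min_{y'}P(y')$, where $B$ uniformly bounds $|n_\phi(\mathbf{x})_y|$ on $\Phi \times \mathcal{X} \times \mathcal{Y}$ thanks to bounded weights and compact $\mathcal{X}$; then invoke the strong law for $\hat{P}_N \to P$ almost surely and apply a Glivenko--Cantelli argument to the bounded, parameter-Lipschitz function class $\{(\mathbf{x},y) \mapsto n_\phi(\mathbf{x})_y - \log\sum_{y'} P(y')\exp(n_\phi(\mathbf{x})_{y'})\}_{\phi \in \Phi}$. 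Combining the two pieces yields $\sup_\phi |\hat{L}_p^{(N)}(\phi) - L_p(\phi)| \to 0$ a.s., which together with (i) establishes strong consistency of $\hat{\mathbb{I}}_N$.
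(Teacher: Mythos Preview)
Your proposal is correct and follows essentially the same route as the paper: both adapt the MINE consistency argument of Belghazi \etal, splitting into an approximation error handled by the universal approximation theorem and an estimation error handled by a uniform law of large numbers together with Lipschitz bounds on the log-sum-exp term. The only cosmetic differences are that the paper targets $\log\frac{P(\mathbf{x},y)}{P(\mathbf{x})P(y)}$ rather than $\log P(y\mid\mathbf{x})$ (equivalent up to an $\mathbf{x}$-dependent shift absorbed by PC-softmax) and phrases the Lipschitz step via continuity of $\log$ and $\exp$ on bounded intervals rather than as a perturbation in $\|\hat P_N-P\|_1$.
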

See the proofs in the appendix for the proof of \autoref{thm:softmax_im}.

Mutual information is often used in generative models to find the maximally informative representation of an observation~\cite{hjelm2018learning,zhao2017infovae}, whereas its implication in classification has been unclear so far. The results of this section imply that the neural network classifier with softmax optimises its weights to maximise the mutual information between inputs and labels under the uniform label assumption. 

\begin{table}[t!]
    \centering
    \begin{tabular}{c r r r}
        \toprule
        $y$ & $\mu$ & \# samples & $p(y)$\\
        \midrule 
        0 & $\mathbf{0}$ & 6,000 & 0.07 \\
        1 & $+\mathbf{2}$ & 12,000 & 0.13 \\
        2 & $-\mathbf{2}$ & 18,000 & 0.20 \\
        3 & $+\mathbf{4}$ & 24,000 & 0.27 \\
        4 & $-\mathbf{4}$ & 30,000 & 0.33 \\
        \bottomrule
    \end{tabular}
    \vspace{1em}
    \caption{Synthetic dataset description. $\mu$ is a mean vector for each Gaussian distribution. \# samples denotes the number (resp. prior distribution) of samples with the non-uniform prior assumption. For the test with the uniform prior assumption, we use 12,000 samples from each distribution.}
    \label{table:synthetic_dataset_spec}
\end{table}

\begin{table}[t!]
\begin{subtable}[t]{\linewidth}
\centering
\begin{tabular}{r r r r r}
\toprule
\multirow{2}{*}{Dimension}&
\multirow{2}{*}{Accuracy(\%)}& 
\multicolumn{3}{c}{Mutual information}\\
\cmidrule(lr{1em}){3-5}
& & MC & MINE & softmax \\
\midrule
1 & 74 & 1.03 & 1.00 & 0.99\\
2 & 85 & 1.30 & 1.22 & 1.28\\
5 & 94 & 1.54 & 1.46 & 1.48\\
10 & 98 & 1.60 & 1.54 & 1.54\\
\bottomrule
\end{tabular}
\caption{Results with balanced datasets.}
\label{table:syn_balanced}
\end{subtable}

\vspace{1em}

\begin{subtable}[t]{\linewidth}
\centering
\begin{tabular}{r r r r r r r}
\toprule
\multirow{2}{*}{Dimension}&
\multicolumn{2}{c}{Accuracy(\%)}&
\multicolumn{4}{c}{Mutual information}\\
\cmidrule(lr{1em}){2-3}
\cmidrule(lr{1em}){4-7}
& softmax & PC-softmax & MC & MINE & softmax & PC-softmax \\
\midrule
1 & 79 & 79 & 1.02 & 0.99 & 1.11 & 0.96 \\
2 & 87 & 88 & 1.23 & 1.17 & 1.31 & 1.20 \\
5 & 93 & 95 & 1.44 & 1.27 & 1.41 & 1.31 \\
10 & 95 & 96 & 1.48 & 1.22 & 1.36 & 1.34 \\
\bottomrule
\end{tabular}
\caption{Results with unbalanced datasets.} 
\label{table:syn_imbalanced}
\end{subtable}

\caption{Mutual information estimation results with softmax-based classification neural networks. MC represents the estimated mutual information via Monte Carlo methods.}
\label{table:cam_info_cam}
\end{table}

\section{Impact of PC-softmax on Classification}
\label{sec:mi_class_exp}
In this section, we measure the empirical performance of PC-softmax as mutual information (MI) and the influence of PC-softmax on the classification task. Since it is impossible to obtain correct MI from real-world datasets, we first construct synthetic data with known properties to measure the MI estimation performance, and then we use two real-world datasets to measure the impact of PC-softmax on classification tasks.

\subsection{Mutual information estimation task}

To construct a synthetic dataset with a pair of continuous and discrete variables, we employ a Gaussian mixture model:
\begin{align}
    P(x) &= \sum_{y=1}^{M} P(y) \mathcal{N}(\mathbf{x} | \mathbf{\mu}_y, \mathbf{\Sigma}_y) \notag\\
    P(x | y) &= \mathcal{N}(\mathbf{x} | \mathbf{\mu}_y, \mathbf{\Sigma}_y), \notag
\end{align}
where $P(y)$ is a prior distribution over the labels. To form a classification task, we use $x$ as an input variable, and $y$ as a label.

For the experiments, we use five mixtures of isotropic Gaussian, each of which has a unit diagonal covariance matrix with different means. We set the parameters of the mixtures to make them overlap in significant proportions of their distributions.

We generate two sets of datasets: one with uniform prior and the other with non-uniform prior distribution over labels, $p(y)$. For the uniform prior, we sample 12,000 data points from each Gaussian, and for the non-uniform prior, we sample unequal number of data points from each Gaussian. In addition, we vary the dimension of Gaussian distribution from 1 to 10. The detailed statistics for the Gaussian parameters and the number of samples are available in \autoref{table:synthetic_dataset_spec}. To train classification models, we divide the dataset into training, validation and test sets. We use the validation set to find the best parameter configuration of the classifier.

We aim to compare the difference of true and softmax-based estimated mutual information $\mathbb{I}(\mathbf{X}, Y)$. 
The mutual information is, however, intractable. We thus approximate it via Monte Carlo (MC) methods using the true probability density function, expressed as:
\begin{equation}
    \label{eq:mc_mi}
    \mathbb{I}(\X, Y) \approx \frac{1}{N} \sum_{i=1}^{N} \log \left( \frac{P(\mathbf{x}_i | y_i)}{P(\mathbf{x}_i)} \right), 
\end{equation}
where $(\mathbf{x}_i, y_i)$ forms a paired sample. \autoref{eq:mc_mi} attains equality as $N$ approaches infinity. 


We use four layers of a feed-forward neural network with the ReLU as an activation for internal layers and softmax as an output layer\footnote{All model details used in this paper are available in the supplementary material.}. We train the model with softmax on balanced dataset and with PC-softmax on unbalanced dataset. We compare the experimental results against mutual information neural estimator (MINE) proposed in \cite{belghazi2018mutual}. Note that MINE requires having a pair of input and label variables as an input of an estimator network, the classification-based MI-estimator seems more straightforward for measuring mutual information between inputs and labels of classification tasks.

\autoref{table:syn_balanced} summarises the experimental results with the balanced dataset. With the balanced dataset, there is no difference between softmax and PC-softmax. Note that the MC estimator has access to explicit model parameters for estimating mutual information, whereas the softmax estimator measures mutual information based on the model outputs without accessing the true distribution. We could not find a significant difference between MC and the softmax estimator. Additionally, we report the accuracy of the trained model on the classification task.

\autoref{table:syn_imbalanced} summarises the experimental results with the unbalanced dataset. The results show that the PC-softmax slightly under-estimates mutual information when compared with the other two approaches. It is worth noting that the classification accuracy of PC-softmax consistently outperforms the original softmax. 
The results show that the MINE slightly under-estimate the MI as the input dimension increases.


\subsection{Classification task}

We test the classification performance of softmax and PC-softmax with two real-world datasets: MNIST~\cite{lecun2010mnist} and CUB-200-2011~\cite{wah2011caltech}. 

We construct balanced and unbalanced versions of the MNIST dataset.
For the balanced-MNIST, we use a subset of the original dataset. For the unbalanced-MNIST, we randomly subsample one tenth of instances for digits 0, 2, 4, 6 and 8 from the balanced-MNIST. 
With CUB-200-2011, we follow the same training and validation splits as in~\cite{cui2018large}. As a result of such splitting, the training set is approximately balanced, where out of the total 200 classes, 196 of them contain 30 instances and the remaining 6 classes include 29 instances. To construct an unbalanced dataset, similar to MNIST, we randomly drop one half of the instances from one half of the bird classes.

We adopt a simple convolutional neural network as a classifier for MNIST. The model contains two convolutional layers with max pooling layer and the ReLU activation, followed by two fully connected layers with the final softmax. For CUB-200-2011, we apply the same architecture as Inception-V3~\cite{cui2018large}.
We measure both the micro accuracy and the average per-class accuracy of the two softmax versions on both datasets. The average per-class accuracy alleviates the dominance of the majority classes in unbalanced datasets. The classification results are shown in \autoref{table:emp_soft}. PC-softmax is significantly more accurate than softmax on unbalanced datasets in terms of the average per-class accuracy. 


\begin{table}[t!]
\begin{subtable}[t]{\linewidth}
\centering
\begin{tabular}{c r r r r r}
\toprule
\multirow{2}{*}{Dataset} & \multicolumn{2}{c}{MNIST} & \multicolumn{2}{c}{CUB-200-2011} \\
&
Balanced& 
Unbalanced& 
Balanced& 
Unbalanced\\
\midrule
softmax & 97.95 & 96.81 & 89.23 & 89.21  \\
PC-softmax & 97.91 & 96.86 & 89.18 & \textbf{89.73}* \\
\bottomrule
\end{tabular}
\caption{Classification accuracy (\%).}

\end{subtable}

\vspace{1em}

\begin{subtable}[t]{\linewidth}
\centering
\begin{tabular}{c r r r r r}
\toprule
\multirow{2}{*}{Dataset} & \multicolumn{2}{c}{MNIST} & \multicolumn{2}{c}{CUB-200-2011} \\
& 
Balanced& 
Unbalanced& 
Balanaced& 
Unbalanced\\
\midrule
softmax & 97.95 & 95.05 & 89.21 & 84.63 \\
PC-softmax & 97.91 & \textbf{96.30} & 89.16 & \textbf{87.69} \\

\bottomrule
\end{tabular}
\caption{Average per-class accuracy (\%).}
\end{subtable}

\vspace{1em}
\caption{Classification accuracy of using softmax and PC-softmax. Numbers of instances for different labels are the same for a balanced dataset and are significantly distinct for an unbalanced dataset. Bold values denote p-values less than 0.05 with the Mann-Whitney U test\protect\footnotemark.}
\label{table:emp_soft}
\end{table}
\footnotetext{Accuracy with * is higher than the current state-of-the-art~\cite{cui2018large}.}
\section{Informative Class Activation Maps: \\
Estimating Mutual Information Between Regions and Labels
}
\label{sec:cam}
In this section, we show that viewing neural network classifiers as mutual information estimators contributes to a more interpretable neural network classifier, via identifying regions of an image that contain high mutual information with a label. 
There exist previous work exhibiting how to identify regions of an image corresponding to particular labels, known as class activation maps (CAM). We further formalise CAMs to be related to information theory. Furthermore, with the new view of neural network classifiers as mutual information evaluators, we are able to depict the quantitative relationship between the information of the entire image and its local regions about a label. We call our new CAM Informative Class Activation Map (infoCAM), since it is based on information theory. Moreover, infoCAM can also improve the performance of the weakly supervised object localisation (WSOL) task than the original CAM. 

To explain infoCAM, we first introduce the concept and definition of the class activation map. We then show how to apply it to the weakly supervised object localisation (WSOL) task. 

\begin{figure*}[t!]
    \centering
    \includegraphics[width=\linewidth]{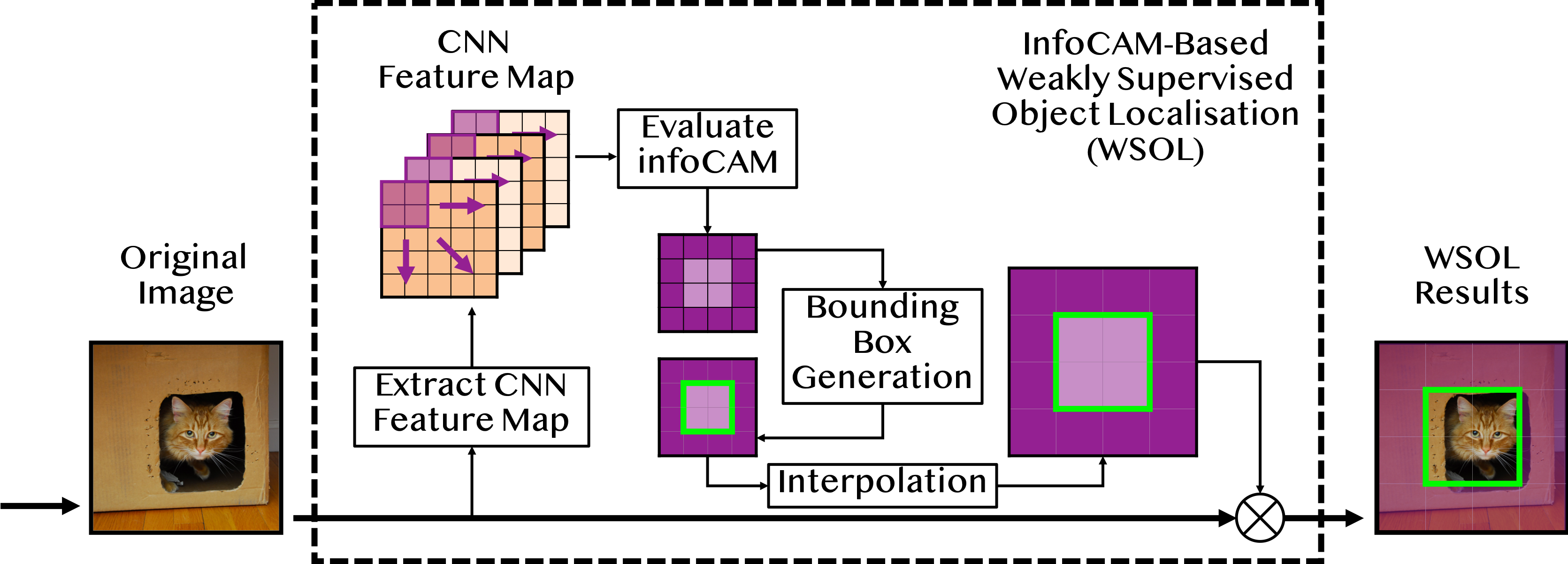}
    \caption{A visualisation of the infoCAM procedure for the WSOL task. The task aims to draw a bounding box for the target object in the original image. The procedure includes: 1) feed input image into a CNN to extract its feature maps, 2) evaluate PMI difference between the true and the other labels of input image for each region within the feature maps, 3) generate the bounding box by keeping the regions exceeding certain infoCAM values and find the largest connected region and 4) interpolate and map the bounding box to the original image.}
    \label{fig:infoCAM-Illustration}
\end{figure*}

\subsection{CAM: Class Activation Map}

Contemporary classification CNNs such as AlexNet~\cite{krizhevsky2012imagenet} and Inception~\cite{szegedy2015going} consist of stacks of convolutional layers interleaved with pooling layers for extracting visual features. 
These convolutional layers result in feature maps. A feature map is a collection of 2-dimensional grids. The size of the feature map depends on the structure of convolution and pooling layers. Generally the feature map is smaller than the original image. The number of feature maps corresponds to the number of convolutional filters.
The feature maps from the final convolutional layer are usually averaged, flattened and fed into the fully-connected layer for classification~\cite{lin2014network}.
Given $K$ feature maps $g_1, .. , g_K$, the fully-connected layer consists of weight matrix $W \in \mathbb{R}^{M \times K}$, where $w_k^y$ represents the scalar weight corresponding to class $y$ for feature $k$.
We use $g_k(a,b)$ to denote a value of 2-dimensional spatial point $(a,b)$ with feature $k$ in map $g_k$.
In~\cite{choe2019attention}, the authors propose a way to interpret the importance of each point in feature maps. The importance of spatial point $(a,b)$ for class $y$ is defined as a weighted sum over features:
\begin{align}
\label{eq:cam_def}
    M_{y}(a, b) = \sum_{k} w_k^{y} g_{k} (a, b).
\end{align}
 We redefine $M_{y}(a, b)$ as an intensity of the point $(a, b)$.
The collection of these intensity values over all grid points forms a class activation map (CAM). CAM highlights the most relevant region in the feature space for classifying $y$.
The input going to the softmax layer corresponding to the class label $y$ is: 
\begin{align}
    \sum_{a,b} M_{y}(a, b) = n(\x)_y.
\end{align}
Intuitively, weight $w_k^y$ indicates the overall importance of the $k$th feature  to class $y$, and intensity $M_{y}(a, b)$ implies the importance of the feature map at spatial location $(a, b)$ leading to the classification of image $\mathbf{x}$ to $y$.

The aim of WSOL is to identify the region containing the target object in an image given a label, without any pixel-level supervision.
Previous approaches tackle the WSOL task by creating a bounding box from the CAM~\cite{choe2019attention}. Such a CAM contains all important locations that exceed a certain intensity threshold. The box is then upsampled to match the size of the original image.

\subsection{InfoCAM: Informative Class Activation Map}
In section \ref{sec:nn_as_mi_evaluators}, we show that softmax classifier carries an explicit implication between inputs and labels in terms of information theory. We extend the notion of mutual information from being a pair of an input image and a label to regions of the input image and labels to capture the regions that have high mutual information with labels.

To simplify the discussion, we assume here that there is only one feature map, \ie $K=1$. However, the following results can be easily applied to the general cases where $K>1$ without loss of generality.
We introduce a region $R$ containing a subset of grid points in feature map $g$. 

Mutual information is an expectation of the point-wise mutual information (PMI) between two variables, \ie $\mathbb{I}(\X,Y) = \mathbb{E}[\text{PMI}(\x,y)]$. Given two instances of variables, we can estimate their PMI via \autoref{eq:softmax_mi}, \ie
\begin{align}
\PMI(\x, y) = n(\x)_y - \log\sum_{y'=1}^{M}\exp(n(\x)_{y'}) + \log M. \notag
\end{align}
The PMI is close to $\log M$ if $y$ is the maximum argument in log-sum-exp. To find a region which is the most beneficial to the classification, we compute the difference between PMI with true label and the average of the other labels and decompose it into a point-wise summation as
\begin{align}
\Diff(\PMI(\x)) = \PMI(\x, y^*) - \frac{1}{M-1}\sum_{y' \neq y^*}\PMI(\x, y') \notag\\
= \sum_{(a,b)\in g} w^{y*} g(a,b) - \frac{1}{M-1}\sum_{y' \neq y^*} w^{y'} g(a,b). \notag
\end{align}
The point-wise decomposition suggests that we can compute the PMI differences with respect to a certain region. Based on this observation, we propose a new CAM, named informative CAM or infoCAM, with the new intensity function $M_{y}^{\Diff}(R)$ between region $R$ and label $y$ defined as follows:
\begin{align}
M_{y}^{\Diff}(R) = \sum_{(a,b)\in R} w^yg(a,b) - \frac{1}{M-1}\sum_{y' \neq y}w^{y'}g(a,b).
\label{eq:info_cam_correct}
\end{align}

The infoCAM highlights the region which decides the classification boundary against the other labels. 
Moreover, we further simplify \autoref{eq:info_cam_correct} to be the difference between PMI with the true and the most-unlikely labels according to the classifier's outputs, denoting as infoCAM+, with the new intensity: 
\begin{align}
M_{y}^{\Diff^+}(R) = \sum_{(a,b)\in R} w^y g(a,b) - w^{y'}g(a,b), 
\label{eq:info_cam_correct_min}
\end{align}
where $y' = \underset{m}{\arg \min} \sum_{(a,b)\in R} w^{m}g(a,b)$. 

The complete procedure of WSOL with infoCAM is visually illustrated in \autoref{fig:infoCAM-Illustration}. We first feed an input image into a CNN to extract its feature maps. Then instead of computing the CAM of the feature map, we compute infoCAM of varying regions from the input image and the class label. Afterwards, we generate the bounding box for the object by preserving regions surpassing a certain intensity level. Then, we generate the bounding box that covers the largest connected remaining regions~\cite{zhou2016learning}. Finally, we interpolate the generated bounding box to the original image size and merge the two. 
\section{Object Localisation with InfoCAM}
\label{sec:exp}
In this section, we demonstrate experimental results with infoCAM for WSOL. We first describe the experimental settings and then present the results.

\begin{table}[t!]
\centering

\begin{tabular}{c l r r r r}
\toprule
&  &  \multicolumn{2}{c}{CUB-200-2011} & \multicolumn{2}{c}{Tiny-ImageNet}\\
&  & 
\makecell{GT \\ Loc. (\%)} & \makecell{Top-1 \\ Loc. (\%)} & \makecell{GT \\ Loc. (\%)} & \makecell{Top-1 \\ Loc. (\%)}\\ 
\midrule
\multirow{6}{*}{\STAB{\rotatebox[origin=c]{90}{VGG}}} & CAM& 42.49 & 31.38 & 53.49 & 33.48 \\
& CAM (ADL) & 71.59 & 53.01 &  52.75 & 32.26 \\
& {infoCAM} & {52.96} & {39.79} & {55.50} & {34.27}\\
& {infoCAM} (ADL) & {73.35} & {53.80} & {53.95} & {33.05} \\ 
& {infoCAM+} & {59.43} & {44.40} &  {55.25} & {34.27}\\
& {infoCAM+} (ADL) & \textbf{75.89} & {54.35} &  {53.91} & {32.94} \\  \midrule
\multirow{6}{*}{\STAB{\rotatebox[origin=c]{90}{ResNet}}} & CAM& 61.66 & 50.84 &  54.56 & 40.55  \\
& CAM (ADL) & 57.83 & 46.56 &  52.66 & 36.88  \\
& {infoCAM} &  {64.78} & {53.22} & \textbf{57.79} & \textbf{43.34} \\
& {infoCAM} (ADL) & {67.75} & {54.71} & {54.18} & {37.79} \\
& {infoCAM+} &  {68.99} & \textbf{55.83} &  {57.71} & {43.07} \\
& {infoCAM+} (ADL) & {69.63} & {55.20} &  {53.70} & {37.71} \\
\bottomrule
\end{tabular}

\vspace{1em}
\caption{Localisation results of CAM and infoCAM on CUB-2011-200 and Tiny-ImageNet. InfoCAM outperforms CAM on localisation of objects with the same model architecture. Bold values represent the highest accuracy for a certain metric. }
\label{table:cam_info_cam}
\end{table}

\subsection{Experimental settings}
We evaluate WSOL performance on CUB-200-2011~\cite{wah2011caltech} and Tiny-ImageNet~\cite{tiny_imagenet}. CUB-200-2011 consists of 200 bird specifies, including 5,994 training and 5,794 validation images. Each bird class contains roughly the same number of instances, thus the dataset is approximately balanced. Since the dataset only depicts birds, not including other kinds of objects, variations due to class difference are subtle~\cite{dubey2018pairwise}. Therefore, CNN-based classifiers tend to concentrate on the most discriminative areas within an image while disregarding other regions that are similar among all the birds~\cite{wang2019camdrop}. Such nuance-only detection can lead to localisation accuracy degradation~\cite{choe2019attention}. 

Tiny-ImageNet is a reduced version of ImageNet in terms of both class number,  number of instances per class and image resolution. It includes 200 classes, and each consists of 500 training and 50 validation images, and is balanced. Unlike CUB-200-2011 comprising only birds, Tiny-ImageNet contains a wide range of objects from animals to daily supplies. Compared with the full ImageNet, training classifiers on Tiny-ImageNet is faster due to image resolution reduction and quantity shrinkage, yet classification becomes more challenging~\cite{odena2017conditional}.

\begin{figure}[t!]
    \centering
    \includegraphics[width=0.5\linewidth]{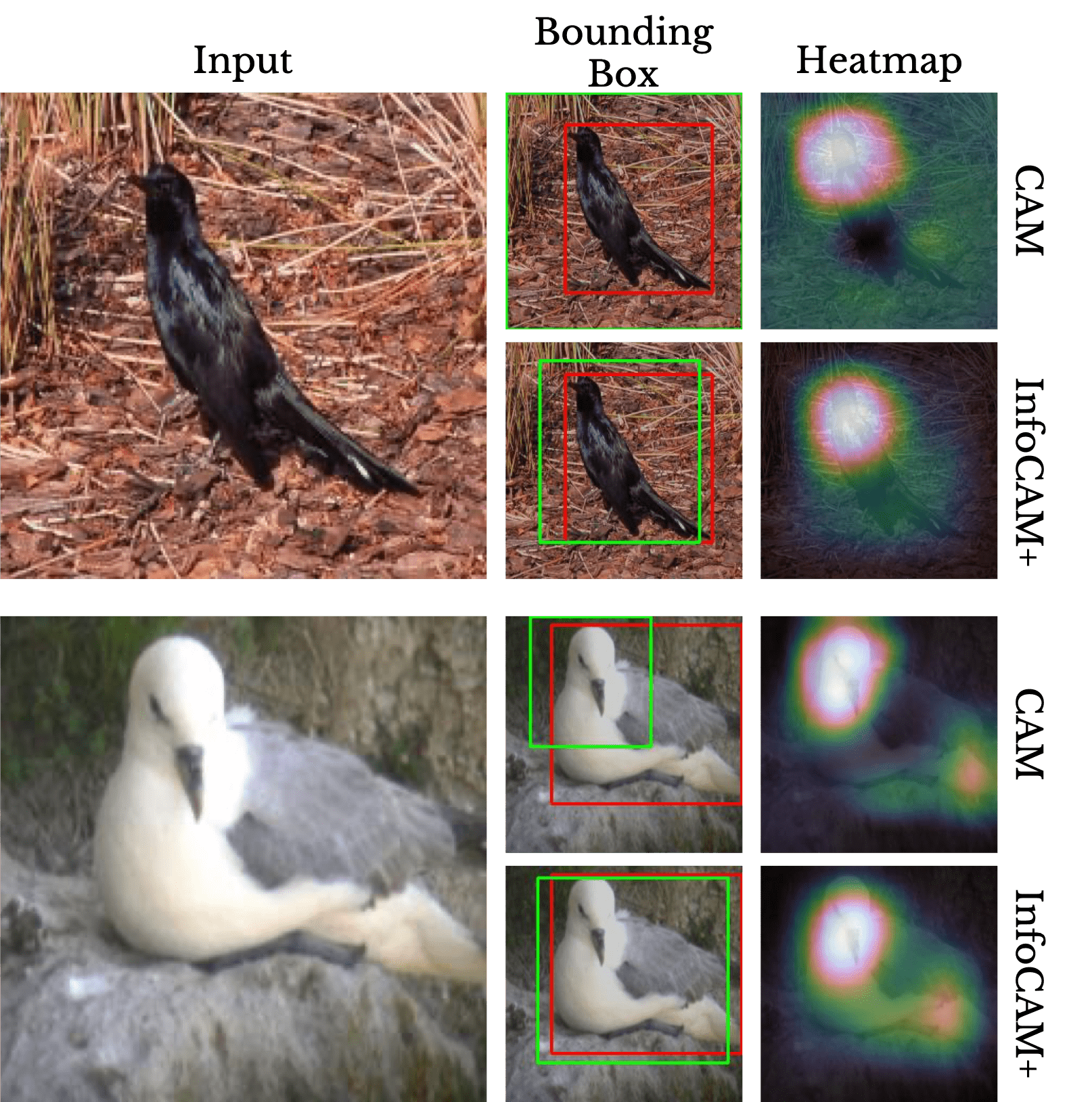}
    \caption{Visualisation of comparison between CAM and infoCAM+. Red and green boxes represent the ground truth and prediction, respectively. Brighter regions represent higher CAM or infoCAM+ values.}
    \label{fig:reg-sig}
\end{figure}

To perform an evaluation on localisation, we first need to generate a bounding box for the object within an image. We generate a bounding box in the same way as in~\cite{zhou2016learning}. Specifically, after evaluating infoCAM within each region of an image, we only retain the regions whose infoCAM values are more than 20\% of the maximum infoCAM and abandon all the other regions. Then, we draw the smallest bounding box that covers the largest connected component. 

We follow the same evaluation metrics in~\cite{choe2019attention} to evaluate localisation performance with two accuracy measures: 1) localisation accuracy with known ground truth class (GT Loc.), and 2) top-1 localisation accuracy (Top-1 Loc.). GT Loc. draws the bounding box from the ground truth of image labels, whereas Top-1 Loc. draws the bounding box from the predicted most likely image label and also requires correct classification. The localisation of an image is judged to be correct when the intersection over union of the estimated bounding box and the ground-truth bounding box is greater than 50\%.

\begin{figure*}[t!]
    \centering
    \includegraphics[width=1.0\linewidth]{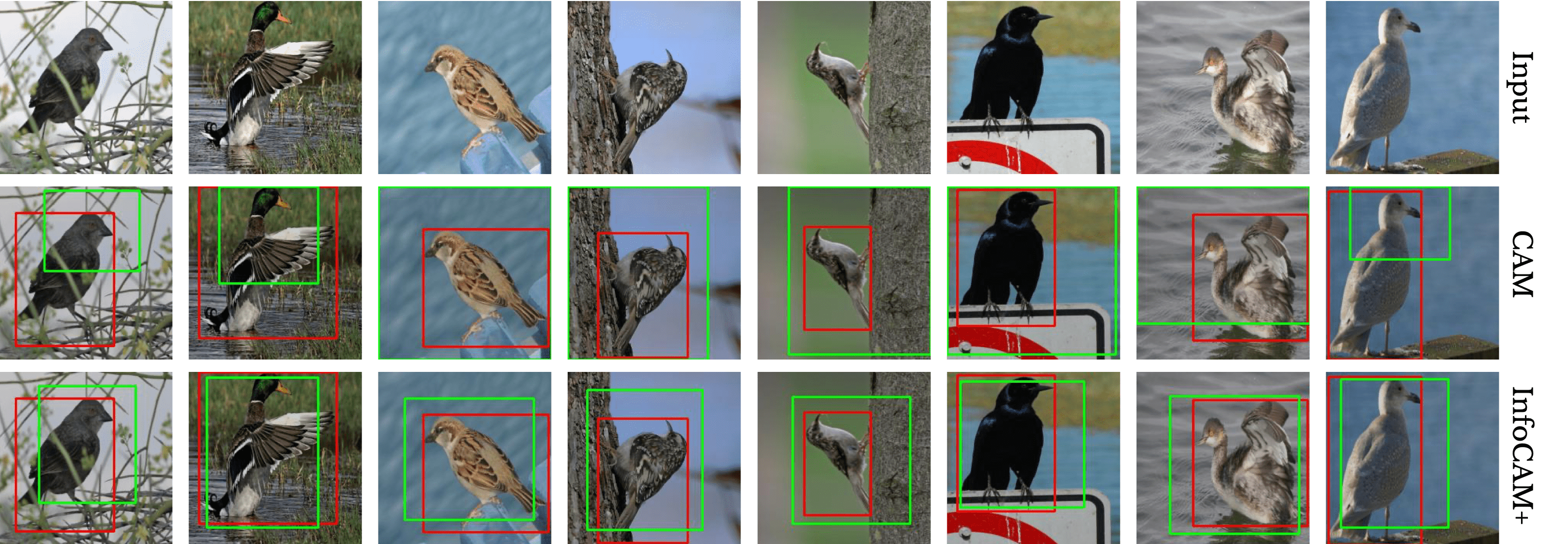}
    \caption{Visualisation of localisation with ResNet50 without using ADL on CUB-200-2011. Images in the second and the third row correspond to CAM and infoCAM+, respectively. Estimated (green) and ground-truth (red) bounding boxes are shown separately.}
    \label{fig:cam-illustration-butterfly}
\end{figure*}

We adopt the same network architectures and hyper-parameters as in~\cite{choe2019attention}, which shows the current state-of-the-art performance. Specifically, the network backbone is ResNet50~\cite{he2016deep} and a variation of VGG16~\cite{szegedy2015going}, in which the fully connected layers are replaced with global average pooling (GAP) layers to reduce the number of parameters. The traditional softmax is used as the final layer since both datasets are well balanced.
InfoCAM requires the region parameter $R$. We apply a square region for the region parameter $R$. The size of the region $R$ is set as $5$ and $4$ for VGG and ResNet in CUB-200-2011, respectively, and $3$ for both VGG and ResNet in Tiny-ImageNet.

These models are tested with the Attention-based Dropout Layer (ADL) to tackle the localisation degradation problem~\cite{choe2019attention}. ADL is designed to randomly abandon some of the most discriminative image regions during training to ensure CNN-based classifiers cover the entire object. The ADL-based approaches demonstrate state-of-the-art performance in CUB-200-2011~\cite{choe2019attention} and Tiny-ImageNet~\cite{choe2018improved} for the WSOL task and are computationally efficient. We test ADL with infoCAMs to enhance WSOL capability.

To prevent overfitting in the test dataset, we evenly split the original validation images to two data piles, one still used for validation during training and the other acting as the final test dataset. We pick the trained model from the epoch that demonstrates the highest top-1 classification accuracy in the validation dataset and report the experimental results with the test dataset. All experiments are run on two Nvidia 2080-Ti GPUs, with the PyTorch deep learning framework~\cite{paszke2017automatic}. 

\subsection{Experimental Results}

Table~\ref{table:cam_info_cam} shows the localisation results on CUB-200-2011 and Tiny-ImageNet. The results demonstrate that infoCAM can consistently improve accuracy over the original CAM for WSOL under a wide range of networks and datasets. Both infoCAM and infoCAM+ perform comparably to each other. ADL improves the performance of both models with CUB-200-2011 datasets, but it reduces the performance with Tiny-ImageNet. We conjecture that dropping any part of a Tiny-ImageNet image with ADL significantly influences classification since the images are relatively small.

\autoref{fig:reg-sig} highlights the difference between CAM and infoCAM. The figure suggests that infoCAM gives relatively high intensity on the object to compare with that of CAM, which only focuses on the head part of the bird.
Figure~\ref{fig:cam-illustration-butterfly} in the Appendix presents additional examples of visualisation for comparing localisation performance of CAM to infoCAM, both without the assistance of ADL\footnote{Please refer to the supplementary material for more Tiny-ImageNet visualisation results.}. From these visualisations, we notice that 
the bounding boxes generated from infoCAM are formed closer to the objects than the original CAM. That is, infoCAM tends to precisely cover the areas where objects exist, with almost no extraneous or lacking areas. For example, CAM highlights the bird heads in CUB-200-2011, whereas infoCAM also covers the bird bodies. 

\textbf{Ablation Study}: InfoCAM differs from CAM in two ways: 1) the new intensity function and 2) region-based intensity smoothing with parameter $R$. We conduct an ablation study to investigate which feature(s) help to  localise objects.
The results suggest that both components are indispensable to improve the performance of the localisation. For the detailed results, please refer to the ablation study table in the Appendix.

\subsection{Localisation of multiple objects with InfoCAM}
\begin{figure}[ht]
\centering
\begin{subfigure}{.3\textwidth}
  \centering
  \includegraphics[width=.99\linewidth]{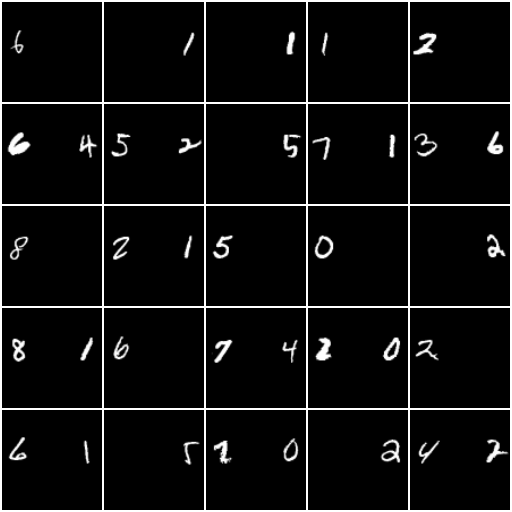}  
  \caption{Original input images. }
  \label{fig:sub-first}
\end{subfigure}
\begin{subfigure}{.3\textwidth}
  \centering
  \includegraphics[width=.99\linewidth]{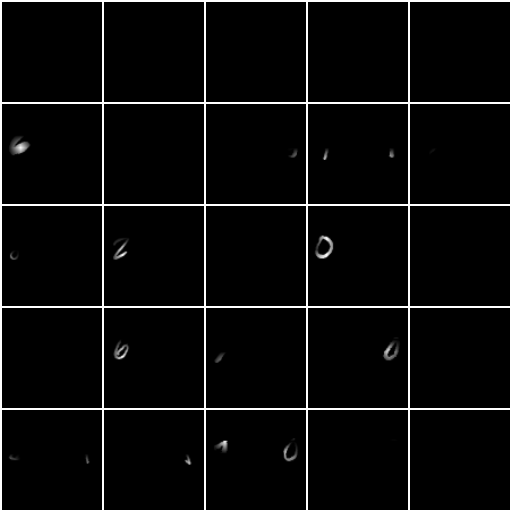}  
  \caption{CAM localisation. }
  \label{fig:sub-second}
\end{subfigure}
\begin{subfigure}{.3\textwidth}
  \centering
  \includegraphics[width=.99\linewidth]{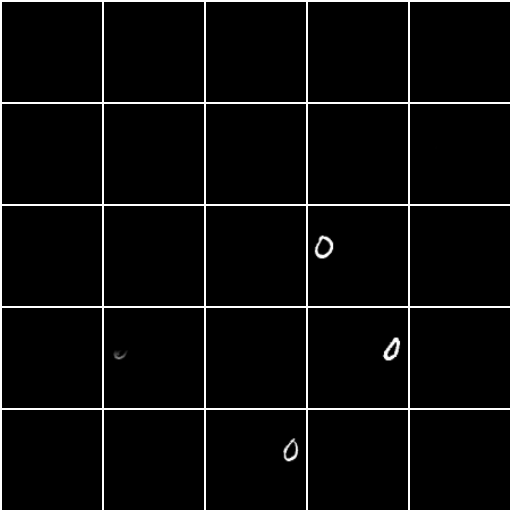}  
  \caption{InfoCAM localisation. }
  \label{fig:sub-second}
\end{subfigure}
\caption{Visualisation of comparison between CAM and infoCAM for the multi-MNIST dataset. Each image has one or two digits in the left and/or right. We aim to extract digit 0 in each image. }
\label{fig:multi_mnist}
\end{figure}
So far, we have shown the results of localisation from a multi-class classification problem. We further extend our experiments on localisation to multi-label classification problems.

A softmax function is a generalisation of its binary case, a sigmoid function. Therefore, we can apply infoCAM to each label for a multi-label classification problem, which is a collection of binary classification tasks.

For the experiment, we construct a double-digit MNIST dataset where each image contains up to two digits randomly sampled from the original MNIST dataset~\cite{lecun2010mnist}. We locate one digit on the left-side, and the other on the right-side. Some of the images only contain a single digit. For each side, we first decide whether to include a digit from a Bernoulli distribution with mean of 0.7. Then each digit is randomly sampled from a uniform distribution. However, we remove the images that contain no digits. 
Random samples from the double-digit MNIST are shown in \autoref{fig:sub-first}.

\begin{table}[t!]
\centering
\begin{tabular}{c l l l l l l l l l l}
\toprule
\multirow{2}{*}{\makecell{Type}} & \multicolumn{10}{c}{Digit Classification Accuracy (\%)} \\
& 0 & 1 & 2 & 3 & 4 & 5 & 6 & 7 & 8 & 9 \\
\midrule
sigmoid & 1.00 & 0.84 & 0.86 & 0.94 & 0.89 & 0.87 & 0.87 & 0.86 & 1.00 & 1.00 \\
PC-sigmoid & 1.00 & 1.00 & 1.00 & 1.00 & 1.00 & 1.00 & 1.00 & 1.00 & 1.00 & 1.00 \\
\bottomrule
\end{tabular}
\caption{Comparison between the classification accuracy results with sigmoid and PC-sigmoid on the double-digit MNIST dataset. }
\label{tbl:sig_pc_sig_acc}
\end{table}

We first compare the classification accuracy results between using the original sigmoid and PC-sigmoid. 
As shown in \autoref{tbl:sig_pc_sig_acc}, PC-sigmoid increases the classification accuracy for each digit type on the test set.
InfoCAM improves the localisation accuracy for the WSOL task as well. CAM achieves the localisation accuracy of 91\%. InfoCAM enhances the localisation accuracy to 98\%. Qualitative visualisations are displayed in \autoref{fig:multi_mnist}. We aim to preserve the regions of an image that are most relevant to a digit, and erase all the other regions. From the visualisation, one can see that infoCAM localises digits more accurately than CAM.

\section{Conclusion}
We have shown the connection between mutual information estimators and neural network classifiers through the variational form of mutual information. The connection explains the rationale behind the use of sigmoid,  softmax and cross-entropy from an information-theoretic perspective. The connection also brings a new insight to understand neural network classifiers. There exists previous work that called the negative log-likelihood (NLL) loss as maximum mutual information estimation~\cite{bahl1986maximum,lecun2006tutorial}. Despite this naming similarity, that work does not show the relationship between softmax and mutual information that we have shown here. 

The connection between neural network classifiers and mutual information evaluators provides more than an alternative view on neural network classifiers. Via converting neural network classifiers to mutual information estimators, we receive two positive consequences for practical applications. First, we improve the classification accuracy, in particular when the datasets are unbalanced. The new mutual information estimators even outperform the prior state-of-the-art neural network classifiers. Second, using the pointwise mutual information between the inputs and labels, we can locate the objects within images more precisely. We also provide a more information-theoretic interpretation of class activation maps. We believe that this opens new ways to understand how neural network classifiers work and improve their performance. 

{\small
\bibliographystyle{ieee}
\bibliography{egbib}

\begin{thebibliography}{10}\itemsep=-1pt

\bibitem{tiny_imagenet}
Tiny imagenet visual recognition challenge.
\newblock \url{https://tiny-imagenet.herokuapp.com/}.
\newblock Accessed: 2019-11-03.

\bibitem{bahl1986maximum}
Lalit~R Bahl, Peter~F Brown, Peter~V De~Souza, and Robert~L Mercer.
\newblock Maximum mutual information estimation of hidden markov model
  parameters for speech recognition.
\newblock In {\em Proc. ICASSP}, volume~86, pages 49--52, 1986.

\bibitem{barber2003algorithm}
David Barber and Felix~V Agakov.
\newblock The im algorithm: a variational approach to information maximization.
\newblock In {\em Advances in Neural Information Processing Systems}, page
  None, 2003.

\bibitem{belghazi2018mutual}
Mohamed~Ishmael Belghazi, Aristide Baratin, Sai Rajeshwar, Sherjil Ozair,
  Yoshua Bengio, Aaron Courville, and Devon Hjelm.
\newblock Mutual information neural estimation.
\newblock In {\em International Conference on Machine Learning}, pages
  531--540, 2018.

\bibitem{choe2018improved}
Junsuk Choe, Joo~Hyun Park, and Hyunjung Shim.
\newblock Improved techniques for weakly-supervised object localization.
\newblock {\em arXiv preprint arXiv:1802.07888}, 2018.

\bibitem{choe2019attention}
Junsuk Choe and Hyunjung Shim.
\newblock Attention-based dropout layer for weakly supervised object
  localization.
\newblock In {\em Proceedings of the IEEE Conference on Computer Vision and
  Pattern Recognition}, pages 2219--2228, 2019.

\bibitem{cui2018large}
Yin Cui, Yang Song, Chen Sun, Andrew Howard, and Serge Belongie.
\newblock Large scale fine-grained categorization and domain-specific transfer
  learning.
\newblock In {\em Proceedings of the IEEE Conference on Computer Vision and
  Pattern Recognition}, pages 4109--4118, 2018.

\bibitem{dubey2018pairwise}
Abhimanyu Dubey, Otkrist Gupta, Pei Guo, Ramesh Raskar, Ryan Farrell, and
  Nikhil Naik.
\newblock Pairwise confusion for fine-grained visual classification.
\newblock In {\em Proceedings of the European Conference on Computer Vision
  (ECCV)}, pages 70--86, 2018.

\bibitem{duchi2011adaptive}
John Duchi, Elad Hazan, and Yoram Singer.
\newblock Adaptive subgradient methods for online learning and stochastic
  optimization.
\newblock {\em Journal of Machine Learning Research}, 12(7), 2011.

\bibitem{geer2000empirical}
Sara~A Geer and Sara van~de Geer.
\newblock {\em Empirical Processes in M-estimation}, volume~6.
\newblock Cambridge University Press, 2000.

\bibitem{he2016deep}
Kaiming He, Xiangyu Zhang, Shaoqing Ren, and Jian Sun.
\newblock Deep residual learning for image recognition.
\newblock In {\em Proceedings of the IEEE Conference on Computer Vision and
  Pattern Recognition}, pages 770--778, 2016.

\bibitem{hjelm2018learning}
R~Devon Hjelm, Alex Fedorov, Samuel Lavoie-Marchildon, Karan Grewal, Phil
  Bachman, Adam Trischler, and Yoshua Bengio.
\newblock Learning deep representations by mutual information estimation and
  maximization.
\newblock In {\em International Conference on Learning Representation}, 2019.

\bibitem{hornik1989multilayer}
Kurt Hornik, Maxwell Stinchcombe, and Halbert White.
\newblock Multilayer feedforward networks are universal approximators.
\newblock {\em Neural Networks}, 2(5):359--366, 1989.

\bibitem{kingma2014adam}
Diederik~P Kingma and Jimmy Ba.
\newblock Adam: A method for stochastic optimization.
\newblock {\em arXiv preprint arXiv:1412.6980}, 2014.

\bibitem{krizhevsky2012imagenet}
Alex Krizhevsky, Ilya Sutskever, and Geoffrey~E Hinton.
\newblock Imagenet classification with deep convolutional neural networks.
\newblock In {\em Advances in Neural Information Processing Systems}, pages
  1097--1105, 2012.

\bibitem{lecun2015deep}
Yann LeCun, Yoshua Bengio, and Geoffrey Hinton.
\newblock Deep learning.
\newblock {\em Nature}, 521(7553):436, 2015.

\bibitem{lecun2006tutorial}
Yann Lecun, Sumit Chopra, Raia Hadsell, Marc~Aurelio Ranzato, and Fu~Jie Huang.
\newblock A tutorial on energy-based learning.
\newblock In {\em Predicting Structured Data}. MIT Press, 2006.

\bibitem{lecun2010mnist}
Yann LeCun, Corinna Cortes, and CJ Burges.
\newblock Mnist handwritten digit database.
\newblock 2010.

\bibitem{lin2014network}
Min Lin, Qiang Chen, and Shuicheng Yan.
\newblock Network in network.
\newblock In {\em International Conference on Learning Representation}, 2014.

\bibitem{maas2013rectifier}
Andrew~L Maas, Awni~Y Hannun, and Andrew~Y Ng.
\newblock Rectifier nonlinearities improve neural network acoustic models.
\newblock In {\em Proc. ICML}, volume~30, page~3, 2013.

\bibitem{mcallester2018formal}
David McAllester and Karl Statos.
\newblock Formal limitations on the measurement of mutual information.
\newblock {\em arXiv preprint arXiv:1811.04251}, 2018.

\bibitem{nair2010rectified}
Vinod Nair and Geoffrey~E Hinton.
\newblock Rectified linear units improve restricted boltzmann machines.
\newblock In {\em Proc. ICML}, 2010.

\bibitem{odena2017conditional}
Augustus Odena, Christopher Olah, and Jonathon Shlens.
\newblock Conditional image synthesis with auxiliary classifier gans.
\newblock In {\em Proceedings of the 34th International Conference on Machine
  Learning-Volume 70}, pages 2642--2651. JMLR. org, 2017.

\bibitem{paszke2017automatic}
Adam Paszke, Sam Gross, Soumith Chintala, Gregory Chanan, Edward Yang, Zachary
  DeVito, Zeming Lin, Alban Desmaison, Luca Antiga, and Adam Lerer.
\newblock Automatic differentiation in pytorch.
\newblock In {\em Proceedings of Neural Information Processing Systems}, 2017.

\bibitem{poole2019variational}
Ben Poole, Sherjil Ozair, Aaron van~den Oord, Alexander~A Alemi, and George
  Tucker.
\newblock On variational bounds of mutual information.
\newblock In {\em International Conference on Machine Learning}, 2019.

\bibitem{srivastava2014dropout}
Nitish Srivastava, Geoffrey Hinton, Alex Krizhevsky, Ilya Sutskever, and Ruslan
  Salakhutdinov.
\newblock Dropout: a simple way to prevent neural networks from overfitting.
\newblock {\em Journal of Machine Learning Research}, 15(1):1929--1958, 2014.

\bibitem{szegedy2015going}
Christian Szegedy, Wei Liu, Yangqing Jia, Pierre Sermanet, Scott Reed, Dragomir
  Anguelov, Dumitru Erhan, Vincent Vanhoucke, and Andrew Rabinovich.
\newblock Going deeper with convolutions.
\newblock In {\em Proceedings of the IEEE Conference on Computer Vision and
  Pattern Recognition}, pages 1--9, 2015.

\bibitem{szegedy2016rethinking}
Christian Szegedy, Vincent Vanhoucke, Sergey Ioffe, Jon Shlens, and Zbigniew
  Wojna.
\newblock Rethinking the inception architecture for computer vision.
\newblock In {\em Proceedings of the IEEE Conference on Computer Vision and
  Pattern Recognition}, pages 2818--2826, 2016.

\bibitem{wah2011caltech}
Catherine Wah, Steve Branson, Peter Welinder, Pietro Perona, and Serge
  Belongie.
\newblock The caltech-ucsd birds-200-2011 dataset.
\newblock 2011.

\bibitem{wang2019camdrop}
Hongjun Wang, Guangrun Wang, Guanbin Li, and Liang Lin.
\newblock Camdrop: A new explanation of dropout and a guided regularization
  method for deep neural networks.
\newblock In {\em International Conference on Information and Knowledge
  Management (CIKM)}, pages 2219--2228, 2019.

\bibitem{zhao2017infovae}
Shengjia Zhao, Jiaming Song, and Stefano Ermon.
\newblock Infovae: Information maximizing variational autoencoders.
\newblock {\em arXiv preprint arXiv:1706.02262}, 2017.

\bibitem{zhou2016learning}
Bolei Zhou, Aditya Khosla, Agata Lapedriza, Aude Oliva, and Antonio Torralba.
\newblock Learning deep features for discriminative localization.
\newblock In {\em Proceedings of the IEEE Conference on Computer Vision and
  Pattern Recognition}, pages 2921--2929, 2016.

\end{thebibliography}
}

\newpage
\onecolumn
\appendix
\begin{center}
{\LARGE\bfseries Supplementary Materials}

{\Large Rethinking Softmax with Cross-Entropy: \\
Neural Network Classifier as \\ Mutual Information Estimator}
\end{center}

\section{Network Architectures}
In this section, we illustrate neural network architectures that have been utilised in the previous experiments. \autoref{fig:softmax_mi_architecture} demonstrates the architecture of the softmax mutual information neural estimator in \autoref{sec:mi_class_exp}. \autoref{fig:mnist_class_architecture} demonstrates the architecture of the network that are utilised to show PC-softmax leads to higher classification accuracy on the unbalanced MNIST dataset as in \autoref{sec:mi_class_exp}. We explain in \autoref{sec:exp} on how to convert the VGG16 architecture to the VGG16-GAP architecture. Such VGG16-GAP is used in infoCAM. We illustrate in \autoref{fig:vgg_gap_architecture} on how to convert the former to the latter architecture. For ResNet50 and Inception-V3, the architectures are identical to \cite{he2016deep} and \cite{szegedy2016rethinking}.

For more detailed information, please refer to the actual implementation, which we plan to make public. 

\begin{figure}[!htbp]
    \centering
    \includegraphics[width=0.6\linewidth]{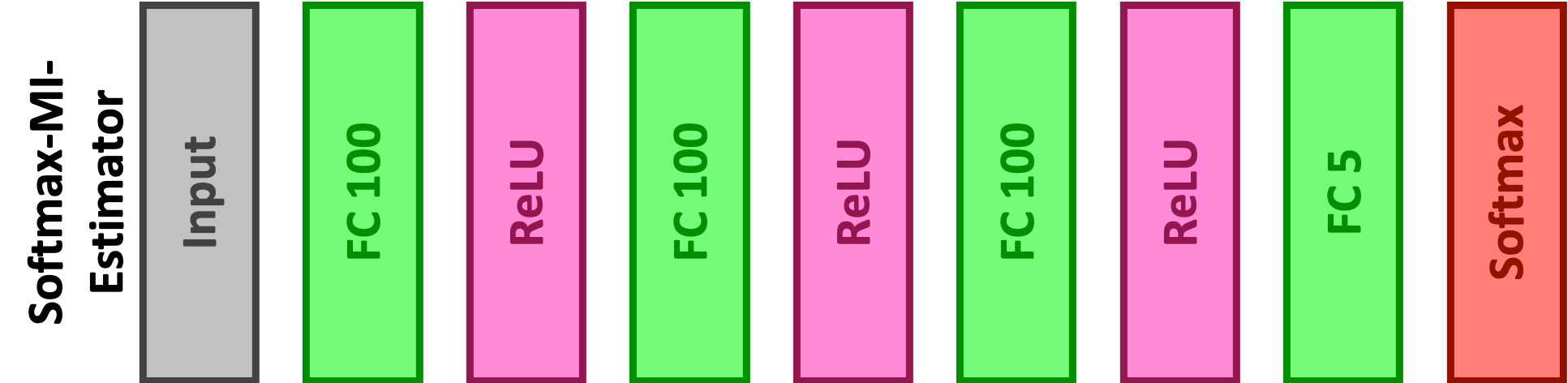}
    \caption{The neural network architecture of the softmax mutual information estimator. The softmax in the last layer can be either the traditional or the PC one. }
    \label{fig:softmax_mi_architecture}
\end{figure}

\begin{figure}[!htbp]
    \centering
    \includegraphics[width=0.6\linewidth]{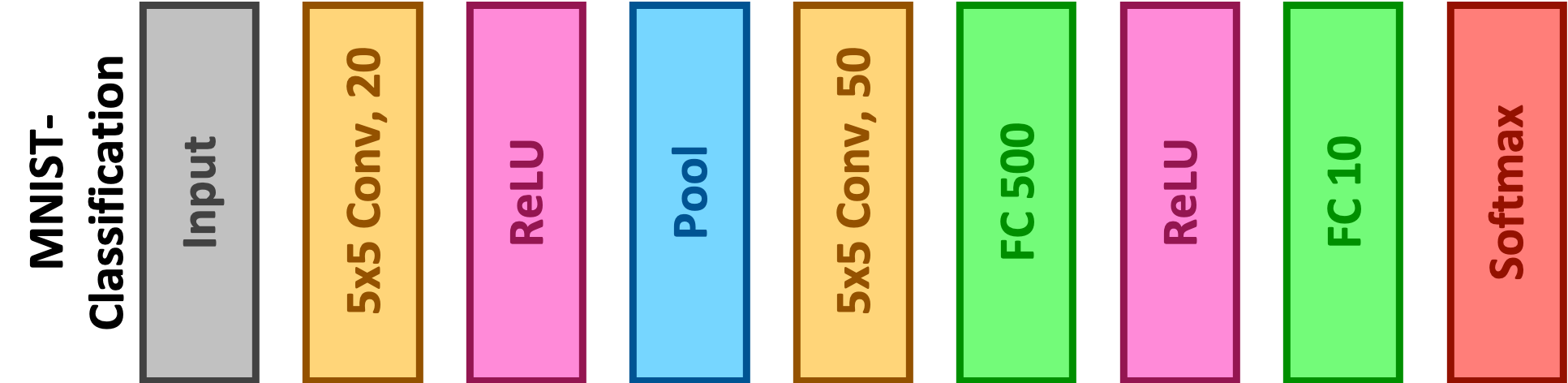}
    \caption{The neural network architecture that is utilised to show PC-softmax leads to higher classification accuracy on the unbalanced MNIST dataset. The softmax in the last layer can be either the traditional or the PC one. }
    \label{fig:mnist_class_architecture}
\end{figure}

\begin{figure}[!htbp]
    \centering
    \includegraphics[width=0.6\linewidth]{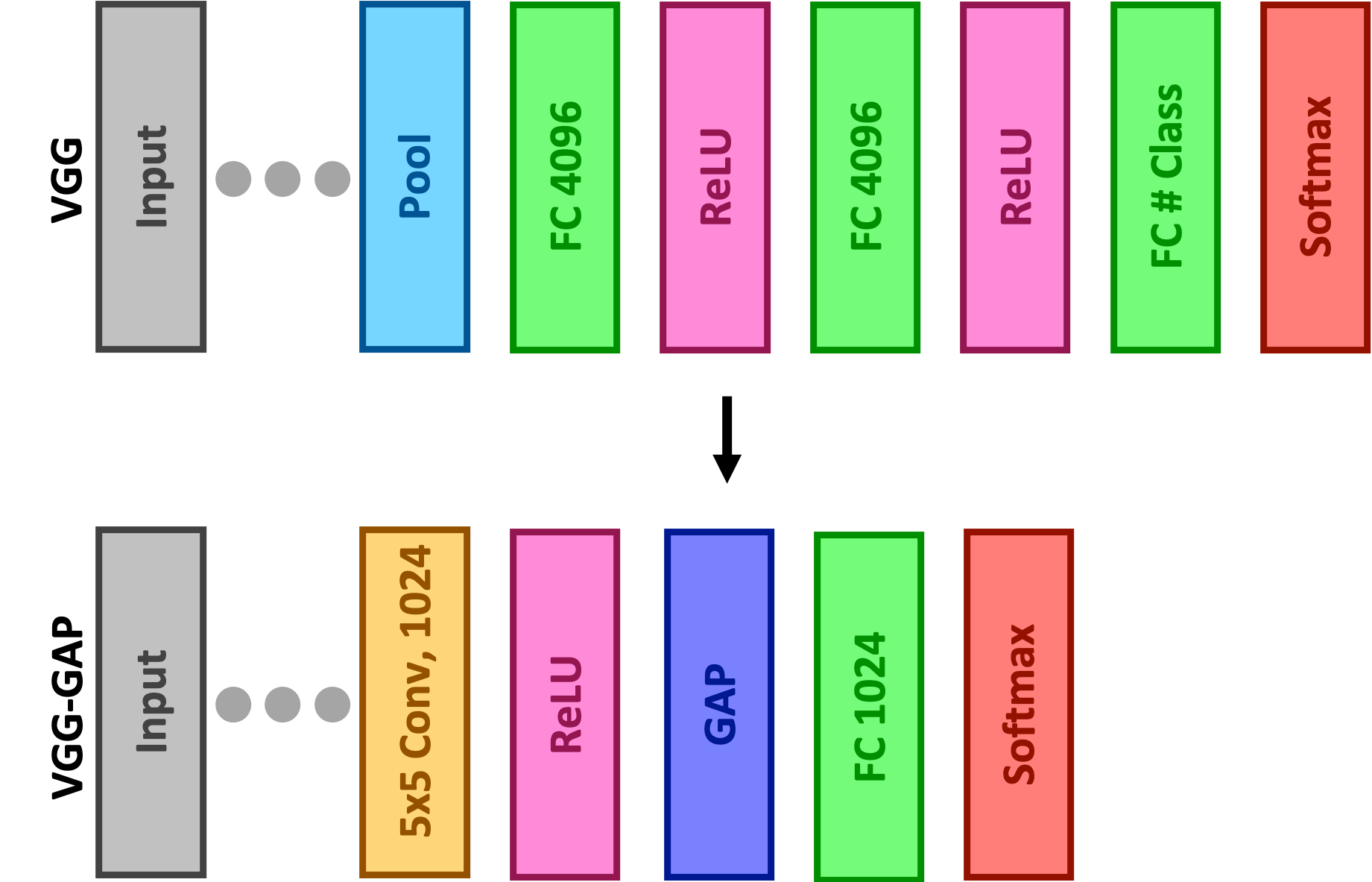}
    \caption{Illustration on the conversion from VGG16 to VGG16-GAP. }
    \label{fig:vgg_gap_architecture}
\end{figure}

\section{Visualisation of Data Distributions}
We show both theoretically and experimentally in \autoref{sec:mi_class_exp} and \autoref{sec:nn_as_mi_evaluators} that neural network classifiers can be considered as mutual information estimators. In this section, we provide visualisation on the distributions of the data that are used to test the effectiveness of the softmax-based mutual information estimator. In such visualisation as \autoref{fig:synthetic_data_distribution}, data points are stratified subsets of the test datasets, so that it can reflect the dataset imbalance. Since it is impossible to visualise data whose dimension is greater or equal to three, we apply principle component analysis (PCA) to reduce the dimension to two. Furthermore, data of different class labels become more distinguishable as dimension increases. This can account for the reason why classification accuracy increases as the dimension rises. 

\begin{figure}[!htbp]
    \centering
    \includegraphics[width=\linewidth]{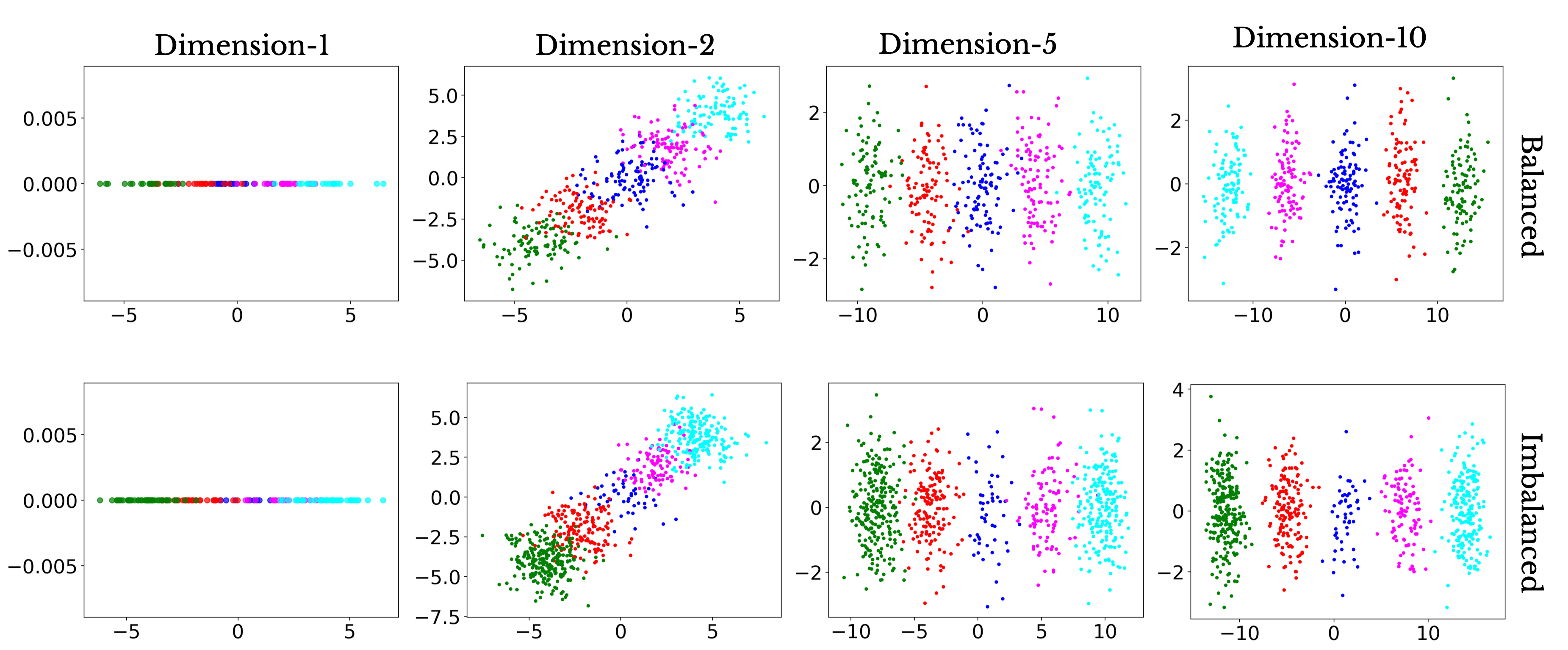}
    \caption{Illustration of the synthetic dataset for evaluating the softmax-based mutual information estimator. For data whose dimension is greater or equal to three, the visualisation is on the results of PCA. The same colour represents the identical class. }
    \label{fig:synthetic_data_distribution}
\end{figure}

\section{Further Result}
In this section, we present some further results on localisation and classification.

\subsection{Localisation and Classification Result}

\autoref{table:cam_info_cam_full} is a reproduction of main result with the classification results. 
Note that the classification performances of CAM and infoCAM is the same since we do not modify the training objective of infoCAM. The result can be used to understand the effect of ADL on the classification task.

\begin{table*}
\centering
\begin{tabular}{c l c c c c}
\toprule
&  & 
\makecell{GT \\ Loc. (\%)} &
\makecell{Top-1 \\ Loc. (\%)} & 
\makecell{Top-1 \\ Cls (\%)} & 
\makecell{Top-5 \\ Cls (\%)}  \\ 
\midrule
\multirow{6}{*}{\makecell{VGG-\\16-\\GAP}} & CAM& 42.49 & 31.38 & 73.97 & 91.83   \\
& CAM (ADL) & 71.59 & 53.01 & 71.05 & 90.20 \\
& {infoCAM} & {52.96} & {39.79} & - & - \\
& {infoCAM} (ADL) & {73.35} & {53.80} & - & - \\
& {infoCAM+} & {59.43} & {44.40} & - & - \\
& {infoCAM+} (ADL) & \textbf{75.89} & {54.35} & - & - \\ \midrule
\multirow{6}{*}{\makecell{ResNet-\\50}} & CAM& 61.66 & 50.84 & 80.54 & 94.09 \\
& CAM (ADL) & 57.83 & 46.56 & 79.22 & 94.02 \\
& {infoCAM} &  {64.78} & {53.22} & - & -  \\
& {infoCAM} (ADL) & {67.75} & {54.71} & - & -  \\
& {infoCAM+} &  {68.99} & \textbf{55.83} & - & -  \\
& {infoCAM+} (ADL) & {69.63} & {55.20} & - & -  \\
\bottomrule
\end{tabular}

\caption{Evaluation results of CAM and infoCAM on CUB-2011-200. Note that the classification accuracy of infoCAM is the same as those of CAM. InfoCAM always outperforms CAM on localisation of objects under the same model architecture.}
\label{table:cam_info_cam_full}
\end{table*}

\begin{table*}
\centering

\centering
\begin{tabular}{c l c c c c}
\toprule
 &  & 
\makecell{GT \\ Loc. (\%)} &
\makecell{Top-1 \\ Loc. (\%)} & 
\makecell{Top-1 \\ Cls (\%)} & 
\makecell{Top-5 \\ Cls (\%)}  \\ 
\midrule
\multirow{6}{*}{\makecell{VGG-\\16-\\GAP}} & CAM& 53.49 & 33.48 & 55.25 & 79.19  \\
& CAM (ADL) & 52.75 & 32.26 & 52.48 & 78.75 \\
& {infoCAM} & {55.50} & {34.27} & - & -\\
& {infoCAM} (ADL) & {53.95} & {33.05} & - & - \\ 
& {infoCAM+} & {55.25} & {34.27} & - & -\\
& {infoCAM+} (ADL) & {53.91} & {32.94} & - & - \\ \midrule
\multirow{6}{*}{\makecell{ResNet-\\50}} & CAM& 54.56 & 40.55 & 66.45 & 86.22 \\
& CAM (ADL) & 52.66 & 36.88 & 63.21 & 83.47 \\
& {infoCAM}& \textbf{57.79} & \textbf{43.34} & - & - \\
& {infoCAM} (ADL) & {54.18} & {37.79} & - & - \\
& {infoCAM+}& {57.71} & {43.07} & - & - \\
& {infoCAM+} (ADL) & {53.70} & {37.71} & - & - \\
\bottomrule
\end{tabular}

\caption{Evaluation results of CAM and infoCAM on Tiny-ImageNet. Note that the classification accuracy of infoCAM is the same as those of CAM. InfoCAM always outperforms CAM on localisation of objects under the same model architecture.}
\label{table:cam_info_cam_full}
\end{table*}

\subsection{Ablation Study}

\autoref{tbl:ablation} shows the result of ablation study. We have tested the importance of three features: 1) ADL, 2) region parameter $R$ and 3) the second subtraction term in \autoref{eq:info_cam_correct}. To combine the result in the main text, the result suggests that both region parameter and subtraction term are necessary to increase the performance of localisation. The choice of ADL depends on the dataset. We conjecture that ADL is inappropriate to apply Tiny-ImageNet since the removal of any part of tiny image, which is what ADL does during training, affects the performance of the localisation to compare with its application to relatively large images.

\begin{table}[t!]
\centering

\begin{subtable}[t]{\linewidth}
\centering
\begin{tabular}{c c c l l}
\toprule
ADL & \makecell{$R$} & 
\makecell{Subtraction \\ Term} &
\makecell{GT Loc. (\%)} & 
\makecell{Top-1 \\ Loc. (\%)} \\ 
\midrule
\multirow{3}{*}{N} & N& N& 42.49& 31.38 \\
 & N& Y& 47.59 $\uparrow$& 35.01 $\uparrow$ \\ 
 & Y& N& 53.40 $\uparrow$& 40.19 $\uparrow$ \\ \midrule
\multirow{3}{*}{Y} & N& N& 71.59& 53.01 \\
 & N& Y& 75.78 $\uparrow$& 54.28 $\uparrow$ \\ 
 & Y& N& 73.56 $\uparrow$& 53.94 $\uparrow$ \\
\bottomrule
\end{tabular}
\caption{Localisation results on CUB-200-2011 with VGG-GAP.}
\label{tbl:ablation}
\end{subtable}

\vspace{1em}

\begin{subtable}[t]{\linewidth}
\centering
\begin{tabular}{c c c l l}
\toprule
ADL & \makecell{$R$} & 
\makecell{Subtraction \\ Term} &
\makecell{GT Loc. (\%)} & 
\makecell{Top-1 \\ Loc. (\%)} \\ 
\midrule
\multirow{3}{*}{N} & N& N& 54.56& 40.55 \\
 & N& Y& 54.29 $\downarrow$& 40.51 $\downarrow$ \\ 
 & Y& N& 57.73 $\uparrow$& 43.34 $\uparrow$ \\ \midrule
\multirow{3}{*}{Y} & N& N& 52.66& 36.88 \\
 & N& Y& 52.52 $\downarrow$& 37.08 $\uparrow$ \\ 
 & Y& N& 54.15 $\uparrow$& 37.76 $\uparrow$ \\
\bottomrule
\end{tabular}
\caption{Localisation results on CUB-200-2011 with ResNet50.}
\end{subtable}

\caption{Ablation study results on the importance of the region parameter $R$ and the subtraction term within the formulation of infoCAM.  Y and N indicates the use of corresponding feature. Arrows indicates the relative performance against the case where both features are not used.}
\label{table:ablation_study}
\end{table}

\subsection{Localisation Examples from Tiny-ImageNet}
We present examples from the Tiny-ImageNet dataset in \autoref{fig:cam-illustration-butterfly}. Such examples show the infoCAM draws tighter bound toward target objects.

\begin{figure*}[t!]
    \centering
    \includegraphics[width=\linewidth]{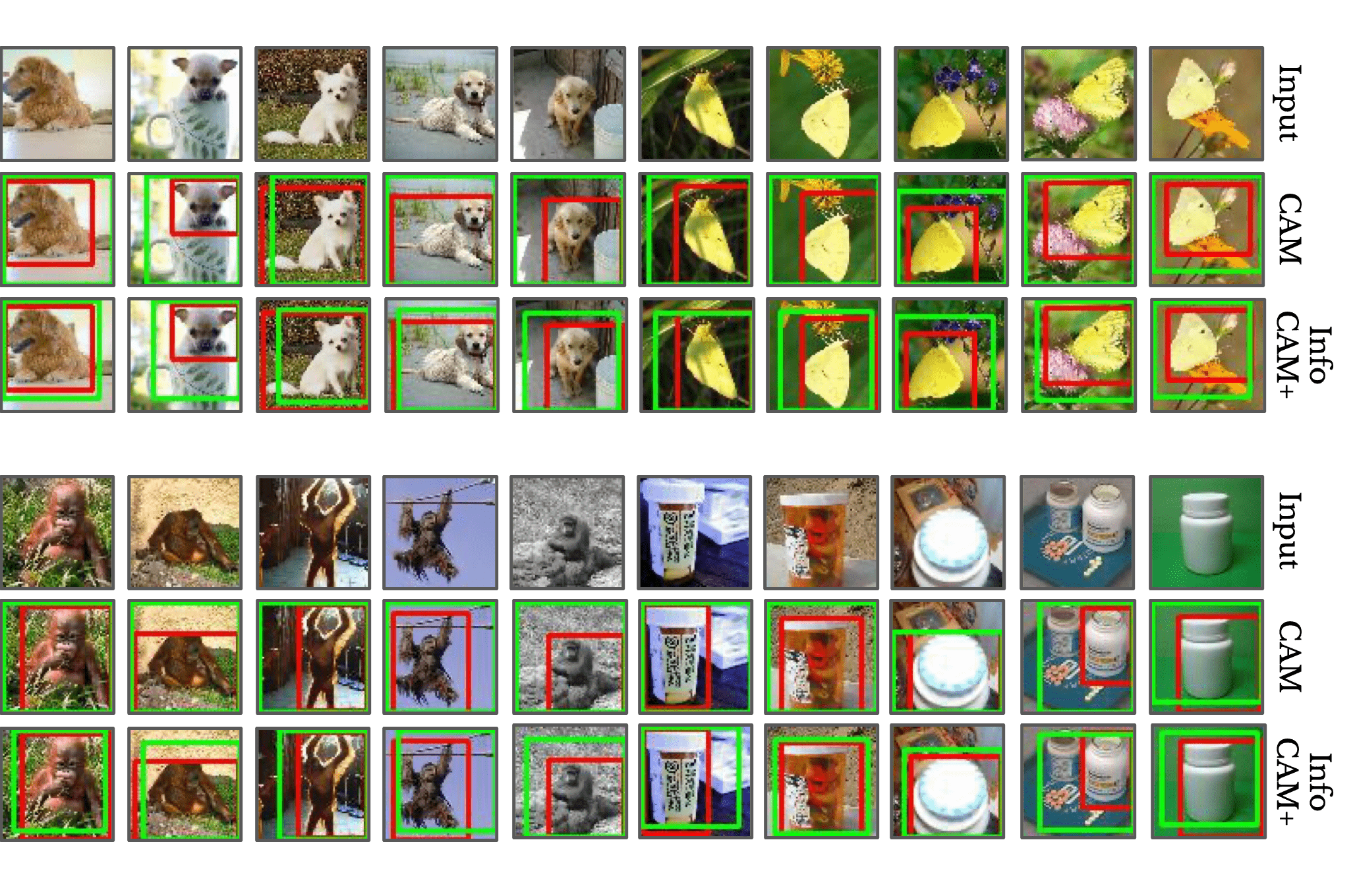}
    \caption{Visualisation of localisation with ResNet50 on CUB-200-2011 and TinyImageNet, without the assistance of ADL. The images in the second row are generated from the original CAM approach and the ones in the third row correspond to infoCAM. The red and green bounding boxes are ground truth and estimations, respectively. }
    \label{fig:cam-illustration-butterfly}
\end{figure*}

\section{Proofs}
\label{sec:proofs}
In this section, we provide rigorous proofs of \autoref{thm:equality} and \autoref{thm:softmax_im}. The structure of proof is similar to the proof used in \cite{belghazi2018mutual}. We assume the input space $\Omega = \X \times Y$ being a compact domain of $\mathcal{R}^{d}$, where all measures are Lebesgue and are absolutely continuous. We restrict neural networks to produce a single continuous output, denoted as $n(\x)_y$. We restate the two theorems for quick reference.

\textbf{Theorem 1.} \textit{Let $f_\phi(\x,y)$ be $n(\x)_y$. Minimising the cross-entropy loss of softmax-normalised neural network outputs is equivalent to maximising \autoref{eq:mi_f_low_bound}, \ie the lower bound of mutual information, under the uniform label distribution. That is, if the dataset is balanced, then training a neural network via minimising cross-entropy with softmax equals enhancing a estimator toward more accurately evaluating the mutual information between data and label.}

\textbf{Theorem 2.} \textit{The mutual information between two random variable $X$ and $Y$ can be obtained via the infimum of cross-entropy with PC-softmax in \autoref{eq:prob_cor_softmax}. Such an evaluation is strongly consistent. }

The proof technique that we have used to prove Theorem 2 is similar to the one used in \cite{belghazi2018mutual}.

\begin{lem}
Let $\eta > 0$. There exists a family of neural network functions $\nphi$ with parameter $\mathbf{\phi}$ in some compact domain such that
\begin{align}
    |\mii(\X;Y) - \miiphi(\X;Y)| \le \eta, 
\end{align}
where 
\begin{align}
    \miiphi(\X;Y) = \underset{\mathbf{\phi}}{\sup} \ \ejxy \big[ \nphi \big] - \ex \log \ey [\exp(\nphi)_y]. 
\end{align}
\end{lem}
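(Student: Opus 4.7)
\medskip
\noindent\textbf{Proof proposal.}
The plan is to mimic the two–stage argument of Belghazi \etal: first identify the optimal non-parametric maximiser of the variational functional and show that its value equals $\mii(\X;Y)$, and then invoke universal approximation to approach that maximiser by a neural network $\nphi$ with parameter $\phi$ in a compact domain. Concretely, define the functional
\begin{equation*}
F[f] \;=\; \ejxy[f(\X,Y)] \;-\; \ex \log \ey[\exp(f(\X,Y))]
\end{equation*}
on measurable $f:\Omega\to\mathbb{R}$ such that both terms are finite. A short calculation using $\log t \le t-1$ (equivalently, the non-negativity of KL divergence between $P(y\mid x)$ and the Gibbs distribution induced by $f$) shows that $F[f]\le \mii(\X;Y)$ for every admissible $f$, with equality at
\begin{equation*}
f^{*}(\x,y) \;=\; \log \frac{P(y\mid \x)}{P(y)}
\end{equation*}
(up to an additive function of $\x$ alone, which cancels between the two terms). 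Hence $\sup_{f} F[f]=\mii(\X;Y)$ over the full measurable class.

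The next step is to approximate $f^{*}$ within $F[\cdot]$ by a bounded continuous function. Since $\Omega$ is a compact subset of $\mathbb{R}^{d}$ and the measures involved are absolutely continuous, for any $\varepsilon>0$ I can truncate $f^{*}$ to some $f^{*}_{R}:=\max(-R,\min(R,f^{*}))$ and then smooth via convolution with a mollifier to obtain a continuous $\tilde f$ with $\|\tilde f - f^{*}_{R}\|_{\infty}\le \varepsilon$ on a sufficiently large compact set of positive mass. Dominated convergence applied to both the affine term and the log-exp term (which is Lipschitz in $f$ in sup norm on bounded ranges, since $|\log \ey e^{f_1} - \log \ey e^{f_2}| \le \|f_1-f_2\|_{\infty}$) yields $|F[f^{*}] - F[\tilde f]| \le \eta/2$ by choosing $R$ large and $\varepsilon$ small.

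Finally, I would invoke a universal approximation theorem (Hornik-type) to pick a neural network family $\{\nphi\}$ with $\phi$ ranging over a compact set such that $\inf_{\phi}\|\nphi-\tilde f\|_{\infty}\le \delta$ on the support of interest. Using the same $\|\cdot\|_{\infty}$-Lipschitz estimate for the log-exp term, this implies $|F[\tilde f] - F[\nphi]|\le \eta/2$ for a suitable $\phi$, and combining with the previous step yields
\begin{equation*}
\mii(\X;Y) \;-\; \miiphi(\X;Y) \;=\; F[f^{*}] \;-\; \sup_{\phi}F[\nphi] \;\le\; \eta,
\end{equation*}
while the reverse inequality $\miiphi(\X;Y)\le\mii(\X;Y)$ is immediate from the variational bound itself, giving the two-sided estimate.

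The main obstacle will be controlling the log-exp term when $f^{*}$ is unbounded, i.e.\ when the likelihood ratio $P(y\mid\x)/P(y)$ is not essentially bounded on $\Omega$. The compactness of $\Omega$ together with absolute continuity of the measures is used precisely to justify a two-sided truncation that loses only a negligible amount of the functional value; making this quantitative (choosing $R$ as a function of $\eta$) is the technical heart of the argument, and is where the assumption on $\Omega$ being compact cannot be dispensed with. Once this is in place, the neural-network approximation step is a routine consequence of universal approximation on compacta.
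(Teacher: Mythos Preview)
Your proposal is correct and follows the same overall architecture as the paper's proof: identify $f^{*}=\log\dfrac{P(y\mid\x)}{P(y)}$ as the (unconstrained) maximiser of the variational functional, then approximate it by a neural network via the universal approximation theorem and control the resulting error in $F[\cdot]$.

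There are two small technical differences worth flagging. First, to pass from closeness of functions to closeness of $F$-values, you use the clean $1$-Lipschitz estimate $|\log\ey e^{f_1}-\log\ey e^{f_2}|\le\|f_1-f_2\|_\infty$; the paper instead applies Jensen to replace $\ex\log\ey[\cdot]$ by $\log\emxy[\cdot]$, then linearises via $\log x\le x-1$ (using $\emxy[\exp f^{*}]=1$) and finally invokes Lipschitz continuity of $\exp$ on $(-\infty,M]$ with constant $e^{M}$. Your route is shorter and avoids the $e^{M}$ blow-up in the constant. Second, you explicitly treat the case where $f^{*}$ is unbounded via a truncation-plus-mollification step before applying universal approximation, whereas the paper simply \emph{assumes} $f^{*}$ is bounded by some $M$ and proceeds; your version is more careful on this point and makes transparent where compactness of $\Omega$ and absolute continuity are actually used.
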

\begin{proof}
Let $\nphi^{\ast}(\X,Y) = \PMI(\X,Y) = \log \frac{P(\X, Y)}{P(\X)P(Y)}$. We then have: 
\begin{align}
    \ejxy [\nphi^{\ast}(\x)_y] = \mii(\X;Y) \quad \text{and} \quad \emxy[ \exp{(\nphi^{\ast}(\x)_y)} ] = 1. 
\end{align}
Then, for neural network $\nphi$, the gap $\mii(\X;Y) - \miiphi(\X;Y)$: 
\begin{align}
    \mii(\X;Y) - \miiphi(\X;Y) 
    &= \ejxy [ \nphi^{\ast}(\X,Y) - \nphi(\X,Y) ] + \ex \log \ey [\exp(\nphi)_y] \notag \\
    &\le \ejxy [ \nphi^{\ast}(\X,Y) - \nphi(\X,Y) ] + \log \emxy[\exp(\nphi(\x)_y)] \notag \\
    &\le \ejxy [ \nphi^{\ast}(\X,Y) - \nphi(\X,Y) ] \notag\\
    & \quad + \emxy[\exp(\nphi(\x)_y) - \exp{(\nphi^{\ast}(\x)_y)} ].  \label{eq:mi_diff}
\end{align}
\autoref{eq:mi_diff} is positive since the neural mutual information estimator evaluates a lower bound. The equation uses Jensen's inequality and the inequality $\log x \le x - 1$. 

We assume $\eta > 0$ and consider $\nphi^{\ast}(\x)_y$ is bounded by a positive constant $M$. Via the universal approximation theorem \cite{hornik1989multilayer}, there exists $\nphi(\x)_y \le M$ such that
\begin{align}
    \ejxy | \nphi^{\ast}(\X,Y) - \nphi(\X,Y) | \le \frac{\eta}{2} \quad \text{and} \quad \emxy|\nphi(\x)_y - \nphi^{\ast}(\x)_y | \le \frac{\eta}{2} \exp{(-M)}. 
    \label{eq:mi_diff_up}
\end{align}

By utilising that $\exp$ is Lipschitz continuous with constant $\exp(M)$ over $(-\infty, M]$, we have
\begin{align}
    \emxy|\exp(\nphi(\x)_y) - \exp(\nphi^{\ast}(\x)_y) | \le \exp(M) \cdot  \emxy|\nphi(\x)_y - \nphi^{\ast}(\x)_y | \le \frac{\eta}{2}.
    \label{eq:mi_diff_lips}
\end{align}

Combining \autoref{eq:mi_diff}, \autoref{eq:mi_diff_up} and \autoref{eq:mi_diff_lips}, we then obtain
\begin{align}
    |\mii(\X;Y) - \miiphi(\X;Y)| &\le \ejxy | \nphi^{\ast}(\X,Y) - \nphi(\X,Y) | \notag \\
    &\quad + \emxy|\exp(\nphi(\x)_y) - \exp(\nphi^{\ast}(\x)_y) | \notag \\
    &= \frac{\eta}{2} + \frac{\eta}{2} = \eta. 
\end{align}
\end{proof}

\begin{lem}
Let $\eta > 0$. Given a family of neural networks $\nphi$ with parameter $\mathbf{\phi}$ in some compact domain, there exists $N \in \mathbb{N}$ such that
\begin{align}
    \forall n \ge N, \text{Pr}\big( | \miin(\X;Y) - \nphi(\X;Y) | \le \eta \big) = 1. 
\end{align}
\begin{proof}
We start by employing the triangular inequality: 
\begin{gather}
    | \miin(\X;Y) - \miiphi(\X;Y) | \notag \\
    \le \underset{\mathbf{\phi}}{\sup} \ |\ejxy[\nphi^{\ast}(\X,Y)] - \ejxyn[\nphi^{\ast}(\X,Y)]| \notag \\ 
    + \underset{\mathbf{\phi}}{\sup} \ |\ex \log \ey [\exp(\nphi)_y] - \exn \log \eyn [\exp(\nphi)_y]| \label{eq:mi_diff_n_phi}
\end{gather}

We have stated previously that neural network $\nphi$ is bounded by $M$, \ie $\nphi(\x)_y \le M$. Using the fact that $\log$ is Lipschitz continuous with constant $\exp(M)$ over the interval $[\exp(-M), \exp(M)]$. We have
\begin{equation}
    | \log \ey [\exp(\nphi)_y] - \log \eyn [\exp(\nphi)_y] | \le \exp(M) \cdot |\ey [\exp(\nphi)_y] - \eyn [\exp(\nphi)_y]|
\end{equation}

Using the uniform law of large numbers \cite{geer2000empirical}, we can choose $N \in \mathbb{N}$ such that for $\forall n \ge N$ and with probability one
\begin{equation}
\underset{\mathbf{\phi}}{\sup} \ |\ey [\exp(\nphi)_y] - \eyn [\exp(\nphi)_y]| \le \frac{\eta}{4} \exp(-M). 
\end{equation}
That is, 
\begin{align}
    | \log \ey [\exp(\nphi)_y] - \log \eyn [\exp(\nphi)_y] | \le \frac{\eta}{4}
\end{align}
Therefore, using the triangle inequality we can rewrite \autoref{eq:mi_diff_n_phi} as: 
\begin{gather}
    | \miin(\X;Y) - \miiphi(\X;Y) |
    \le \underset{\mathbf{\phi}}{\sup} \ |\ejxy[\nphi^{\ast}(\X,Y)] - \ejxyn[\nphi^{\ast}(\X,Y)]| \notag \\ 
    + \underset{\mathbf{\phi}}{\sup} \ |\ex \log \ey [\exp(\nphi)_y] - \exn \log \ey [\exp(\nphi)_y]| + \frac{\eta}{4}. 
    \label{eq:mi_diff_add_const}
\end{gather}

Using the uniform law of large numbers again, we can choose $N \in \mathbb{N}$ such that for $\forall n \ge N$ and with probability one
\begin{align}
    \underset{\mathbf{\phi}}{\sup} \ |\ex \log \ey [\exp(\nphi)_y] - \exn \log \ey [\exp(\nphi)_y]| \le \frac{\eta}{4} 
    \label{eq:margin_less}
\end{align}
and: 
\begin{align}
    \underset{\mathbf{\phi}}{\sup} \ |\ejxy[\nphi^{\ast}(\X,Y)] - \ejxyn[\nphi^{\ast}(\X,Y)]| \le \frac{\eta}{2}. 
    \label{eq:join_less}
\end{align}

Combining \autoref{eq:mi_diff_add_const}, \autoref{eq:margin_less} and \autoref{eq:join_less} leads to
\begin{equation}
    | \miin(\X;Y) - \underset{\phi}{\sup} \ \miiphi(\X;Y) | \le \frac{\eta}{2} + \frac{\eta}{4} + \frac{\eta}{4} = \eta. 
\end{equation}
\end{proof}
\end{lem}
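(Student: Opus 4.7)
The plan is to bound $|\miin(\X;Y) - \miiphi(\X;Y)|$ by combining the triangle inequality with two successive applications of the uniform strong law of large numbers over the compact parameter space. First I would decompose $\miin - \miiphi$ so that, after passing the supremum over $\phi$ and the absolute value through a single triangle inequality, the task reduces to controlling two $\phi$-uniform pieces: the linear deviation $\sup_\phi|\ejxy[\nphi] - \ejxyn[\nphi]|$ of the empirical mean from the true expectation, and the nonlinear deviation $\sup_\phi|\ex \log \ey[\exp(\nphi)_y] - \exn \log \eyn[\exp(\nphi)_y]|$ involving two stacked empirical measures. I would allocate budgets of $\eta/2$ to the linear piece and $\eta/4 + \eta/4$ to the nonlinear piece.

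For the linear piece, since $\nphi$ is uniformly bounded by $M$ and continuous in $\phi$ on a compact domain, the uniform SLLN of \cite{geer2000empirical} applies directly to the function class $\{\nphi : \phi\}$, so I can choose $N_1$ large enough that $\sup_\phi|\ejxy[\nphi] - \ejxyn[\nphi]| \le \eta/2$ for all $n \ge N_1$ with probability one.

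For the nonlinear piece, I would insert the pivot $\exn \log \ey[\exp(\nphi)_y]$. The first half, $\sup_\phi|\ex \log \ey[\exp(\nphi)_y] - \exn \log \ey[\exp(\nphi)_y]|$, is another uniform SLLN applied to the bounded continuous function $\x \mapsto \log \ey[\exp(\nphi(\x,y))]$, delivering an $\eta/4$ bound after some $N_2$. The second half, $\sup_\phi \exn|\log \ey[\exp(\nphi)_y] - \log \eyn[\exp(\nphi)_y]|$, is where the main technical move lives: because $\exp(\nphi)_y \in [e^{-M}, e^{M}]$, the logarithm is Lipschitz with constant $e^{M}$ on this range, so this difference is dominated by $e^{M} \cdot \sup_\phi|\ey[\exp(\nphi)_y] - \eyn[\exp(\nphi)_y]|$; a third uniform SLLN chosen to be below $\eta e^{-M}/4$ then gives $\eta/4$ after cancellation.

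The main obstacle is the nested empirical measures in the nonlinear term, which cannot be absorbed by a single uniform LLN; the Lipschitz reduction for $\log$ on the compact interval $[e^{-M}, e^{M}]$ is what makes the decoupling go through, and it is precisely why the uniform boundedness of $\nphi$ was imposed upfront. Taking $N = \max(N_1, N_2, N_3)$ and intersecting the three probability-one events on which each uniform convergence holds yields $|\miin(\X;Y) - \miiphi(\X;Y)| \le \eta/2 + \eta/4 + \eta/4 = \eta$ for all $n \ge N$ almost surely, which is the claim.
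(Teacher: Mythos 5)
Your proposal is correct and follows essentially the same route as the paper's proof: a triangle-inequality split into the linear deviation of the joint empirical expectation and the nonlinear $\log$-of-inner-expectation term, the Lipschitz bound for $\log$ on $[e^{-M},e^{M}]$ with constant $e^{M}$ to decouple the nested empirical measure over $Y$ (choosing that uniform LLN tolerance as $\tfrac{\eta}{4}e^{-M}$), and two further applications of the uniform law of large numbers with budgets $\eta/2$ and $\eta/4$. Your explicit insertion of the pivot $\exn \log \ey[\exp(\nphi)_y]$ and taking $N=\max(N_1,N_2,N_3)$ with the intersection of the almost-sure events only makes explicit what the paper leaves implicit.
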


Now, combining the above two lemmas, we prove that our mutual information evaluator is strongly consistent. 
\begin{proof}
Using the triangular inequality, we have
\begin{align}
    |\mii(\X;Y) - \miin(\X;Y)| \le |\mii(\X;Y) - \miiphi(\X;Y)| + |\miin(\X;Y) - \nphi(\X;Y)| \le \epsilon. 
\end{align}
\end{proof}

\end{document}